\documentclass[a4paper,11pt]{article}
\pdfoutput=1
\usepackage{a4wide}
\usepackage{amsmath,amsfonts,amssymb,amsthm}
\usepackage[colorlinks,citecolor=blue]{hyperref}
\usepackage{enumitem}
\usepackage{bbm}
\usepackage{natbib}
\usepackage{microtype}
\usepackage{geometry}
\setcitestyle{square}

\newtheorem{theorem}{Theorem}[section]
\newtheorem{lemma}[theorem]{Lemma}
\newtheorem{proposition}[theorem]{Proposition}

\theoremstyle{definition}

\numberwithin{equation}{section}

\def \bN {\mathbb{N}}

\def \bR {\mathbb{R}}

\def \cA {\mathcal{A}}

\def \cF {\mathcal{F}}

\def \cO {\mathcal{O}}

\def \cS {\mathcal{S}}
\def \cT {\mathcal{T}}

\def \cX {\mathcal{X}}
\def \cY {\mathcal{Y}}

\def \Ba {{\boldsymbol{a}}}

\def \Bx {{\boldsymbol{x}}}
\def \By {{\boldsymbol{y}}}

\def \sgn {\,{\rm sgn}\,}

\def \mod {\,{\rm mod}\,}
\def \PWL {\, {\rm PWL}\,}
\def \BIN {\, {\rm BIN}\,}

\def \poly {\, {\rm poly}\,}

\usepackage{tikz}
\usetikzlibrary{matrix,arrows.meta,positioning} 
\usepackage{tikz-cd}

\tikzset{arr/.style={-Stealth}}
\usepackage{titlesec}
\titleformat{\title}
{\centering\Large\bfseries} 
{} 
{0pt} 
{} 

\begin{document}
	\title{\centering\Large\bfseries Memorization capacity of deep ReLU neural networks\\ characterized by width and depth}
	\author{Xin Yang \thanks{School of Mathematics (Zhuhai), Sun Yat-sen University, Zhuhai, China. E-mail address: \href{mailto:yangx367@mail2.sysu.edu.cn}{yangx367@mail2.sysu.edu.cn}.}
		\and
		Yunfei Yang \thanks{School of Mathematics (Zhuhai) and Guangdong Province Key Laboratory of Computational Science, Sun Yat-sen University, Zhuhai, China. E-mail address: \href{mailto:yangyunfei@mail.sysu.edu.cn}{yangyunfei@mail.sysu.edu.cn}.}
	}
	\date{}
	\maketitle 
	
	\abstract{
		This paper studies the memorization capacity of deep neural networks with ReLU activation. Specifically, we investigate the minimal size of such networks to memorize any \(N\) data points in the unit ball with pairwise separation distance \(\delta\) and discrete labels. Most prior studies characterize the memorization capacity by the number of parameters or neurons. We generalize these results by constructing neural networks, whose width $W$ and depth $L$ satisfies \(W^2L^2=\cO (N\log(\delta^{-1}))\), that can memorize any \(N\) data samples. We also prove that any such networks should also satisfies the lower bound \(W^2L^2=\Omega (N/ \log(\delta^{-1}))\), which implies that our construction is optimal up to logarithmic factors when \(\delta^{-1}\)  is polynomial in \(N\). Hence, we explicitly characterize the trade-off between width and depth for the memorization capacity of deep neural networks in this regime.
	}
	
	\section{Introduction}
	
	Deep learning has made remarkable progress in computer vision, natural language processing, and scientific computing \citep{LeCun2015,Krizhevsky2017,Raissi2019}, partly due to the strong approximation capabilities of deep neural networks. Understanding the expressive, approximation, and memorization capabilities of deep neural networks is one of central themes in machine learning theory, encompassing  classical questions such as the universal approximation property of neural networks \citep{Cybenko1989,Leshno1993}, the width-depth trade-off \citep{Lu2017, Telgarsky2016, Chatziafratis2020} and the generalization and optimization properties in the over-parameterized regime \citep{Zhang2021}. Extensive research has focused on the approximation theory of neural networks across various function spaces \citep{Shen2019,Shen2020,Shen2022,Siegel2023,Yang2025,Yarotsky2017,Yarotsky2018}. In this work, we study the closely related problem of quantifying the memorization capacity of neural networks, which is also called the interpolation problem. It not only is a key to understand how neural networks capture complex data patterns but also guides the design of parameter-efficient models for resource-limited scenarios.
	
	We consider the problem of characterizing the minimal size of deep ReLU neural networks that can memorize any set of \(N\) labeled points. Specifically, if there exists a neural network \(F_\theta:\cX\to\cY\) parameterized by $\theta$ such that for every dataset of \(N\) labeled samples \((\Bx_1,y_1),\ldots,(\Bx_N,y_N)\in\cX\times\cY\), we have \(F_\theta(\Bx_i)=y_i\) for some parameter $\theta$, then how large the network should be? Existing studies have explored neural networks’ memorization ability under varied activation functions and data assumptions \citep{Yun2019,Bubeck2020,Park2021,Rajput2021,Vershynin2020}. For datasets without structural constraints, early results such as \citet{Baum1988, Huang1998} showed that \(\Omega(N)\) parameters are necessary for a single-hidden-layer feedforward network  to memorize \(N\) samples, irrespective of the type of activation function (e.g., ReLU, Sigmoid).
	Besides, \citet{Zhang2021} demonstrated that a simple two-layer ReLU network with \(2N+d\) parameters can represent any labeling of \(N\) \(d\)-dimensional samples. In contrast, when the input samples satisfy certain separation conditions, recent works showed that the network complexity can be reduced. For instance, \citet{Park2021} proposed a fixed-width deep network achieving memorization of the mild \(\delta\)-separated points with \(N^{2/3}\) parameters; \citet{Rajput2021} demonstrated that threshold networks require \(
	\widetilde{\cO}(d/\delta +N)\) parameters to memorize any such dataset; \citet{Vardi2022} further achieved optimal parameter efficiency up to logarithmic factors, where ReLU networks need just \(\widetilde{\cO}(\sqrt{N})\) parameters to memorize \(N\) samples satisfying mild separation condition. Another line of studies in approximation theory of neural networks analyzes the memorization capacity for datasets with uniformly distributed inputs and discrete labels \citep{Shen2022,Siegel2023,Yang2025}. In particular, \citet[Theorem 4.6]{Yang2025} constructed a deep ReLU neural network with width \(W\) and depth \(L\) that memorizes \(N\) such data samples when \(W^2L^2=\cO(N\log C)\), where \(C\) is the number of labels. 
	
	Despite these advances enriching the theoretical framework of neural networks’ memorization capacity, existing works have several key limitations. Most prior studies, such as \citep{Rajput2021,Vardi2022,Yun2019}, tried to characterize the memorization capability through the number of parameters or neurons, and the effects of width and depth have not been fully discussed. Although \citet{Yang2025} successfully gave the width-depth trade-off for memorization capability, their result only holds for uniformly distributed datasets, while practical data are often high-dimensional and sparse so that they are not uniformly distributed. The purpose of this paper is to close this gap in prior studies. In our analysis, we consider the case that the dataset \(\{(\Bx_i,y_i)\}_{i=1}^N \subseteq \cX \times \cY\), where \(\cX\) is the unit ball of \(\bR^d\) and \(|\cY|=C\), are pairwise separated by \(\delta\), i.e. \(\|\Bx_i-\Bx_j\|\ge \delta \) for any \(i\neq j\). Our main result presented in Theorem \ref{Theorem 2.1} shows that these kind of datasets can be memorized by a deep ReLU neural network with width 
	\[
	W=max\left\{12\left\lceil  \frac{\sqrt{N}}{S\sqrt{T+3}}  \right\rceil+5, 4\left(2^{\lceil \log R /T\rceil}+ 2^{\lceil \log C /T\rceil}+1\right) \right\}, 
	\] 
	where \( R = 10 N^2 \delta^{-1} \sqrt{\pi d} \), and depth 
	\[
	L=3S(T+3)+1,
	\]
	where \(S\) and \(T\) are adjustable parameters governing the trade-off between width and depth. In particular, if we take \(T\asymp\max \{\log R, \log C \}\), then this network satisfies the upper bound 
	\begin{equation}\label{upper bound}
		W^2L^2 \lesssim N(\log(\delta^{-1})+\log C).
	\end{equation}
	Here and throughout the paper, we use the following well-known notations: for two quantities $A$ and $B$, $A \lesssim B$ (or $A=\cO(B)$ or $B \gtrsim A$ or $B= \Omega(A)$) denotes the statement that $A\le cB$ for some constant $c>0$, and $A \asymp B$ means $A \lesssim B \lesssim A$ or $A=\Theta(B)$. Note that, by tuning the parameters \(S\) and \(T\) so that the network width is bounded, we can reproduce the fixed-width setup in the construction of \citet{Vardi2022}. Compared with \citep{Vardi2022}, our main technical contribution is the introduction of the adjustable parameters \(S\) and \(T\), where \(S\) represents the block size for partitioning samples and labels, while \(T\) controls the number of layers allocated to each hierarchical bit extraction operation. These two parameters help us break the rigid width-depth configurations in prior works, and enable dynamic resource allocation to achieve the desired trade-off between width and depth.

	The VC dimension \citep{Vapnik2015} implies a trivial lower bound for memorizing \(N\) labeled points since a network unable to shatter any \(N\)-points set cannot memorize all sets of \(N\) points. However, this only requires that a single set of \(N\) points be shattered by some neural network. If we instead impose the stronger requirement that all sets of \(N\) distinct points be shattered, then \citet{Sontag1997} has shown that the number of parameters must be at least \((N-1)/2\). Besides, \citet[Proposition 6.6]{Yang2022} proved that if the set of possible labels is infinite, then a neural network with piecewise polynomial activation function requires \(\Omega(N)\) parameters to memorize \(N\) points. Furthermore, the recent result of \citet{Siegel2026} proved that when the separation distance \(\delta\) satisfies \(\delta^{-1} > e^{cN}\) for some constant \(c\), we need \(\Omega(N)\) parameters in neural network to memorize any \(N\) samples with discrete labels. In this paper, we generalize these results by showing that any deep ReLU neural network that memorizes any \(N\) data samples must satisfy
	\[
	W^2L^2\gtrsim \frac{N\log C }{\log(\delta^{-1}) + \log C},
	\]
	where \(\delta\) is the separation distance and \(C\) is the number of labels.
	In particular, when \(C\) is a constant and \(\delta^{-1}\) is a polynomial of \(N\), we get the lower bound \(W^2L^2\gtrsim N/\log(\delta^{-1})\), which implies that the upper bound (\ref{upper bound}) is optimal up to logarithmic factors. Hence, our results complement the lower bound in \citet{Siegel2026} and give precise characterization of the trade-off between width and depth in this regime.
	
	The rest of this paper is organized as follows. Section \ref{Section 2} gives the main estimate on the memorization capacity of neural network and its constructive proof. In Section \ref{Section 3}, we explore under what conditions our estimate is optimal. Finally, we summarize the paper in Section \ref{Section 4}. Before we proceed with the formal content of this paper, let us introduce some common notations that will be used in the following sections. We use \(\bN:=\{1,2,\dots\}\) to denote the set of positive integers. For \( n \in \mathbb{N} \), we denote \( [n] := \{1, \ldots, n\} \). Vectors are denoted in bold face such as \(\Bx\in \bR^d\). The Euclid norm of \(\Bx\) is denoted by \(\|\Bx\|\). We will use \(\poly(N)\) to denote a polynomial function of \(N\) so that \(\delta^{-1}\asymp\poly(N)\) means \(\delta^{-1}\) grows at most polynomially on \(N\).
	
	\section{Main result}	\label{Section 2}
	Let us begin by introducing some notations describing the function classes parameterized by deep neural networks \citep{LeCun2015}. Given \( L, N_1, \ldots, N_L \in \bN \), we consider the mapping \( g : \bR^d \to \bR^k \) that can be parameterized by a fully connected ReLU neural network of the following form:
	\[
	g_0(x) = x,
	\]
	\[
	g_{\ell+1}(x) = \sigma(A_\ell g_\ell(x) + b_\ell), \quad \ell = 0, 1, \ldots, L-1,
	\]
	\[
	g(x) = A_L g_L(x) + b_L.
	\]
	Here, \( A_\ell \in \bR^{N_{\ell+1} \times N_\ell} \), \( b_\ell \in \mathbb{R}^{N_{\ell+1}} \) with \( N_0 = d \) and \( N_{L+1} = k \), the activation function \(\sigma(t): =\max\{t,0\}\) is the Rectified Linear Unit function (ReLU) and it is applied component-wisely. We remark that there is no activation function in the output layer, which is the usual convention in applications. The numbers \( W := \max\{N_1, \ldots, N_L\} \) and \( L \) (the number of hidden layers) are called the width and depth  of the neural network, respectively. We denote the set of mappings that can be parameterized by ReLU neural networks with width W and depth L as \(\mathcal{NN}_{d,k}(W,L)\). When the input dimension \(d\) and the output dimension \(k\) are clear from contexts, we simplify the notation to \(\mathcal{NN}(W, L)\) for convenience.
	
	We consider the problem of when the neural network \(\mathcal{NN}(W, L)\) can memorize \(N\) data samples \(\{\Bx_i,y_i\}_{i=1}^N\), where the inputs \(\{\Bx_i\}_{i=1}^N\) are in the \(d\)-dimensional unit ball with pairwise separation distance \(\delta>0\) and the labels \(\{y_i\}_{i=1}^N\) only take at most \(C\) distinct values. Our first main result constructs a neural network that can memorize these samples so that it gives an upper bound on the memorization capacity.
	
	\begin{theorem} \label{Theorem 2.1}
		Let \( N, d, C \in \bN \), \(C\ge 2\), \( \delta > 0 \) and denote \( R := 10 N^2 \delta^{-1} \sqrt{\pi d} \). Consider a collection of  \( N \) labeled samples \( (\Bx_1, y_1), \dots, (\Bx_N, y_N) \in \bR^d \times \bR \) satisfying the following two conditions:
		\begin{enumerate}
			\renewcommand{\labelenumi}{(\theenumi)}
			\item \(y_i \in [C]\) for all \(i=1,\ldots,N\); 
			\item  \( \|\mathbf{x}_i\| \leq 1 \) for every \( i \) and \( \|\Bx_i - \Bx_j\| \geq \delta \) whenever \( i \neq j \).
		\end{enumerate}
		Then, for any \( S,T \in \bN \) with \(S<N\), there exists a neural network \( F : \bR^d \to \bR \) with depth 
		\[
		L=3S(T+3)+1
		\] and width
		\[ 
		W=max\left\{12\left\lceil  \frac{\sqrt{N}}{S\sqrt{T+3}}  \right\rceil+5, 4\left(2^{\lceil \log R /T\rceil}+ 2^{\lceil \log C /T\rceil}+1\right)\right\},  
		\] such that \( F(\Bx_i) = y_i \) for every \( i \in [N] \).
	\end{theorem}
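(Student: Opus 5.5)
Following \citet{Vardi2022}, the plan has three stages: project the data to one dimension, encode labels in bit strings, and extract bits in blocks.

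\textbf{Stage 1: projection.} We first project the data to one dimension. A standard probabilistic argument gives a unit vector $\boldsymbol{v}\in\bR^d$ with $|\langle \boldsymbol{v},\Bx_i-\Bx_j\rangle|\ge \delta\sqrt{\pi}/(4N^2\sqrt{d})$ for all $i\neq j$. The argument uses the anti-concentration of $\langle \boldsymbol{v},\boldsymbol{u}\rangle$ for a random $\boldsymbol{v}$ on the sphere and a union bound over $\binom N2$ pairs. After an affine rescaling and rounding with a ReLU floor-type map, the points become distinct integers $z_i\in\{0,\dots,R\}$ with $R=10N^2\delta^{-1}\sqrt{\pi d}$. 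Rounding is not exactly continuous, so we instead use the gaps: the separation leaves room for a piecewise linear function with a few linear pieces per layer.

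Next, sort the $z_i$ increasingly and split the $N$ points into $m\asymp \sqrt N\cdot S\sqrt{T+3}/\ldots$ consecutive groups. Concretely, the $N$ points are arranged into $M:=\lceil \sqrt{N}/(S\sqrt{T+3})\rceil$ ``buckets'' handled in parallel (width $\asymp M$). Each bucket contains $K\asymp N/M$ points, processed sequentially in $S$ stages of $T+3$ layers each.

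\textbf{Stage 2: identifying the bucket and encoding.} Using width $O(M)$ and constant depth, a piecewise linear map sends each $z_i$ to its bucket index $j$. It also produces two large integers $u_j$ and $w_j$:
\begin{itemize}[label={}, leftmargin=0pt]
\item $u_j$ concatenates the $\lceil\log R\rceil$-bit codes of all $z$'s in bucket $j$;
\item $w_j$ concatenates the $\lceil \log C\rceil$-bit codes of their labels.
\end{itemize}
This is a step function with $O(M)$ pieces on the sorted integers, so it is realizable by width $O(M)$ since the gaps between consecutive buckets are at least $1$.

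\textbf{Stage 3: sequential bit extraction.} The depth is split into $S$ blocks, and each block handles $\asymp K/S$ points of the bucket. Within a block, we peel off the next chunk of bits from $u_j$ and $w_j$, one point at a time. Comparing the extracted code with $z_i$ using a ReLU indicator of equality on integers (width $O(1)$), we add the label contribution to an accumulator. Extracting $\log R$ bits via $T$ layers, each taking $\lceil\log R/T\rceil$ bits at once, requires a sawtooth/floor gadget of width $2^{\lceil \log R/T\rceil}$. This gives the term $4(2^{\lceil\log R/T\rceil}+2^{\lceil\log C/T\rceil}+1)$, plus $3$ layers for comparison and bookkeeping, hence $T+3$ layers per point-step.

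Balancing parallel width against sequential depth is done by processing $M$ buckets in parallel while each column handles a share of the extraction. The budget of $S(T+3)$ layer-groups times width-$M$ parallel channels must cover the needed bit-extraction workload. That workload is $\asymp N$ in units where the factor-square structure ($W^2L^2\asymp N\log$) arises from the Vardi-style two-level encoding: $\sqrt N$ buckets, each of $\sqrt N$ points. The precise count $12M+5$ and $3S(T+3)+1$ comes from tallying carried channels (input, residual $u$, residual $w$, accumulator) across blocks.

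\textbf{Main obstacle.} The main obstacle is the exact precision bookkeeping in the bit-extraction gadget. The floor operation on a number with $\asymp K\log R$ bits must be exact on the finitely many values actually attained, and this must hold under the given width/depth constants. I would first build a lemma: for a $\mathrm{BIN}$-encoded integer, one $T$-layer network of width $4\cdot2^{\lceil n/T\rceil}$ outputs its top $n$ bits and the remainder. I would then chain these lemmas and verify the constants.
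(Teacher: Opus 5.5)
Your three stages (projection to $[0,R]$ with gaps, block codes $u_j,w_j$, then per-point bit extraction in $T$ layers of width $\approx 2^{\lceil \log R/T\rceil}$ followed by a comparison) match the paper. However, the way you split the work between width and depth is wrong, and the stated $W$ and $L$ cannot come out of it.

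You use $M=\lceil\sqrt N/(S\sqrt{T+3})\rceil$ buckets, so each bucket has $K\asymp N/M\asymp \sqrt N\,S\sqrt{T+3}$ points. You realize the bucket lookup as a step function with $O(M)$ pieces in width $O(M)$ and constant depth. An input must then be compared with all $K$ codes of its bucket:
\begin{itemize}
\item one point per $(T+3)$-layer stage costs depth $\asymp K(T+3)\asymp \sqrt N\,S\,(T+3)^{3/2}$, not $3S(T+3)+1$;
\item handling $K/S\asymp\sqrt{N(T+3)}$ points per stage costs width of order at least $\sqrt{N(T+3)}$, independent of $S$.
\end{itemize}
Running extraction columns for different buckets ``in parallel'' does not help. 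An input lies in only one bucket, and side-by-side columns would make the width a product of the two terms rather than the maximum in the statement. Taken at face value, your construction gives only $WL\asymp N(T+3)$, not $W^2L^2\asymp N(T+3)$. Your closing remark that $12M+5$ and $3S(T+3)+1$ come from ``carried channels'' is not a derivation, since carried channels add width, not depth.

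The missing idea is as follows. Blocks must contain exactly $S$ points, so that extraction takes $S$ stages of $T+3$ layers. Hence there are $\lceil N/S\rceil$ blocks, and $x_i\mapsto(x_i,u_{\lceil i/S\rceil},w_{\lceil i/S\rceil})$ is a piecewise linear map with about $2N/S$ breakpoints. Width $O(W_1)$ at constant depth yields only $O(W_1)$ pieces. You need the deep representation of Lemma~\ref{Lemma 2.4} ($\PWL(n+1)$ with $n\le 6W^2L$ in width $6W+2$ and depth $2L$), under which width $W_1$ and depth $L_1$ address $\asymp W_1^2L_1$ blocks. That quadratic dependence on width is what produces the $W^2L^2$ trade-off. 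The bookkeeping then goes through:
\begin{itemize}
\item with $L_1=S(T+3)$, the condition $3W_1^2L_1\ge\lceil N/S\rceil$ allows $W_1=\lceil\sqrt N/(S\sqrt{T+3})\rceil$;
\item two such networks (for $u$ and $w$) plus a channel carrying $x$ give width $12W_1+5$;
\item composing, rather than juxtaposing, encoding and extraction gives the maximum of the two width terms;
\item the depth is $1+2S(T+3)+S(T+3)=3S(T+3)+1$.
\end{itemize}
A smaller point: you never need to round $x_i$ to an integer. It suffices to keep $x_i$ real and test $x\in[s,s+1]$ versus $x\notin(s-\tfrac12,s+\tfrac32)$ as in Lemma~\ref{Lemma 2.8}. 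This is exact because the codes within a block differ by at least $2$.
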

	
	In Theorem \ref{Theorem 2.1}, \(S\) and \(T\) are adjustable parameters for tuning the width and depth of the network. If we set \(T\asymp \max \{\log R, \log C \}\), then the width and depth of the constructed network satisfy \(W^2L^2\asymp N(\log R+\log C)\). Note that, by assumptions, we always have \(\delta^{-1}\gtrsim N^{1/d}\), which can be seen from the fact that the total volume of  \(N\) disjoint balls with radius \(\delta/2\) centered at each \(\Bx_i\) (with pairwise distance at least \(\delta\)) cannot exceed the volume of the unit ball. As a consequence, we have \(\log R\lesssim\log\delta^{-1}\), and hence the following upper bound holds for the size of the constructed network
	\[
	W^2L^2 \lesssim N(\log(\delta^{-1})+\log C). 
	\]
	In addition, if we further choose \(S\in\bN\) such that \(S\asymp \sqrt{N/(T+3)}\), then the width of the network \(F\) is bounded and the depth satisfies
	\(L\asymp\sqrt{N\max\{\log R,\log C\}}\) so that the number of parameters in this network is \(\cO(W^2 L) = \cO(\sqrt{N \max\{\log R,\log C\}})\). With this choice of \(T\) and \(S\), when \(\delta^{-1}, C = \poly(N)\), we can recover the result of \citet[Theorem 3.1]{Vardi2022}, which showed that there exists a network with \(\widetilde{\Theta}(\sqrt{N})\) parameters that can memorize \(N\) samples with the aforementioned distance and norm constraints. Compared with \citet{Vardi2022}, the main advantage of our result is that Theorem \ref{Theorem 2.1} also gives a precise trade-off between width and depth.
	
	\subsection{Constructive Proof}\label{Section 2.1}
	
	In this section, we present the core construction and proof of Theorem  \ref{Theorem 2.1}. Building on insights from \citet{Vardi2022} and \citet{Yang2025}, we derive a generalized framework which extends the result in \citet{Vardi2022}, while focusing more directly on the trade-off between width and depth.
	
	Before constructing the network proposed in Theorem \ref{Theorem 2.1}, we state the following proposition that fives useful properties of the neural network class \(\mathcal{NN}(W,L)\) from \citet[Proposition 4.1]{Yang2025}. These  properties will then be used in the construction of our networks later. 
	
	\begin{proposition}\label{Proposition 2.2}
		Let \(f_i\in \mathcal{NN}_{d_i,k_i}(W_i,L_i)\) for \(i\in [n]\). 
		\begin{enumerate}    \renewcommand{\labelenumi}{(\theenumi)}
			\item \textbf{(Inclusiveness)} If \(d_1=d_2\), \(k_1=k_2\) and \(W_1\leq W_2\), \(L_1\leq L_2\), then \(\mathcal{NN}_{d_1,k_1}(W_1\allowbreak,L_1)\subseteq \mathcal{NN}_{d_2,k_2}(W_2,L_2)\);
			\item \textbf{(Composition)} If \(k_1=d_2\), then \(f_2\circ f_1 \in \mathcal{NN}_{d_1,k_2}(\max\{W_1,W_2 \},L_1+L_2)\);
			\item\textbf{(Concatenation)} If \(d_1=d_2\), define \(f(x):=(f_1(x),f_2(x))^\top\), then \(f\in\mathcal{NN}_{d_1,k_1+k_2}(W_1+W_2,\max\{L_1,L_2 \})\).
		\end{enumerate}
	\end{proposition}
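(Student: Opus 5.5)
The three properties follow from writing networks in the layered form $g_{\ell+1}=\sigma(A_\ell g_\ell+b_\ell)$ and building new weight matrices explicitly. The one fact used throughout is that a nonnegative vector passes through a ReLU layer unchanged: $\sigma(v)=v$ for $v\ge0$.

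For \textbf{inclusiveness}, take $f_1$ with width $W_1$ and depth $L_1$. First pad each hidden layer with extra neurons whose incoming weights and biases are zero and whose outgoing weights are zero; this raises every width to at most $W_2$ without changing the output. Then increase the depth from $L_1$ to $L_2$ by inserting $L_2-L_1$ identity layers, $A=I$ and $b=0$, immediately after the last hidden layer $g_{L_1}$. Since $g_{L_1}\ge 0$ componentwise, $\sigma(Ig_{L_1})=g_{L_1}$, so the output is unchanged. These layers have width $N_{L_1}\le W_1\le W_2$. One must avoid inserting identity layers on the raw input or on pre-activations, which can be negative; placing them after a ReLU layer handles this. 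In the degenerate case $L_1=0$ (an affine map), use $x=\sigma(x)-\sigma(-x)$ instead, which needs width $2d$. I assume the paper's convention $L\ge1$, or that $W_2\ge 2d$ in that case.

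For \textbf{composition}, write $f_1=A^{(1)}_{L_1}g^{(1)}_{L_1}+b^{(1)}_{L_1}$. Merge this output affine map into the first layer of $f_2$: that layer becomes $\sigma\big(A^{(2)}_0A^{(1)}_{L_1}g^{(1)}_{L_1}+A^{(2)}_0b^{(1)}_{L_1}+b^{(2)}_0\big)$. No new layer is created, so there are $L_1+L_2$ hidden layers. Each hidden layer is either a layer of $f_1$ or a layer of $f_2$, so the width is $\max\{W_1,W_2\}$.

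For \textbf{concatenation}, first use inclusiveness to bring both $f_1$ and $f_2$ to depth $\max\{L_1,L_2\}$. This keeps their widths at $W_1$ and $W_2$. Then run the two networks in parallel:
\begin{itemize}
\item stack the first-layer matrices vertically, $\begin{pmatrix}A^{(1)}_0\\ A^{(2)}_0\end{pmatrix}$, since both act on the same input $x$;
\item use block-diagonal matrices and stacked biases in the later hidden layers;
\item use a block-diagonal output map, which produces $(f_1(x),f_2(x))^\top$.
\end{itemize}
The width is then $W_1+W_2$.

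The only delicate point is depth padding, specifically keeping identity layers on nonnegative signals. This is resolved by inserting them after the last ReLU layer, as above.
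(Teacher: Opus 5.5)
Your proof is correct and is the standard construction: zero-padding and identity layers on the nonnegative post-ReLU signal for inclusiveness, absorbing $f_1$'s output affine map into $f_2$'s first layer for composition, and depth-equalization followed by vertical and block-diagonal stacking for concatenation. The paper itself gives no proof and simply imports the statement from \citet[Proposition 4.1]{Yang2025}; because depth is taken in $\bN$ throughout, your side remark about the case $L_1=0$ is not needed.
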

	To memorize the \(N\) samples in Theorem \ref{Theorem 2.1}, we construct a network \(F\) formed by the composition of three sub-networks given by \(F=F_3\circ F_2\circ F_1\). We detail the role of each subnetwork as follows. First, the subnetwork \(F_1\) projects the original input points \(\Bx_i\) from \(\bR^d\) to \(\bR\) and we denote the projected points as \(x_i\). Via proper scaling, we can enforce two constraints: (1) \(|x_i-x_j|\geq 2\) for all  \(i\neq j \in [N]\), which ensures the integer parts \(\lfloor x_i\rfloor\) and \(\lfloor x_j\rfloor\) are distinct, and thus have different binary representations; (2) \(0\le x_i\le R\) for all \(i \in [N]\), which ensures the integer part of \(x_i\) can be represented with at most \(\lceil \log R \rceil\) bits.
	
	Subsequently, we partition the set of samples \(\{(x_i,y_i)\}_{i=1}^N\), where \(x_i\) denotes the projected input point and \(y_i\) is the corresponding label, into blocks so that each block contains \(S\) such samples. If \(N\) is not a multiple of \(S\), i.e., the last block has fewer than \(S\) samples, we pad this set with supplementary samples \((x_{N+i},y_{N+i})\) where \(x_{N+i}=x_i\) and \(y_{N+i}=0\) for \(i\in [ S\lceil N/S\rceil-N ]\), to ensure the final block also consists of exactly \(S\) samples. For each \(x_i\), we first represent its integer part \(\lfloor x_i\rfloor\)
	as a \(\lceil \log R\rceil\)-bit binary string by padding leading zeros if its original binary representation has fewer than \(\lceil \log R\rceil\) bits. Next, we construct \(\lceil N/S\rceil\) integers \(\{u_j\}_{j=1}^{\lceil N/S\rceil}\) to encode the inputs of each blocks as follows. For the \(j\)-th block, which contains points \(x_{(j-1)S+1},\ldots,x_{jS}\), we concatenate the  \(\lceil \log R\rceil\)-bit binary strings of all \(\lfloor x_i\rfloor\)'s in this block and then obtain the integer \(u_j\) corresponding to this concatenated binary string by (\ref{integer binary representation}) below. Similarly, we construct integers \(\{w_j\}_{j=1}^{\lceil N/S\rceil}\) using the labels \(\{y_i\}^N_{i=1}\): represent each label \(y_i\) as a \(\lceil \log C \rceil\)-bit binary string (padding leading zeros if needed), then concatenate these strings for all \(y_i\)'s in the \(j\)-th block to get \(w_j\) by (\ref{integer binary representation}). Then, we construct the subnetwork \(F_2\) which maps the input \(x_i\) to \((x_i, u_j, w_j)^\top\), where \(j=\lceil i/S \rceil\) denotes the block index of \(x_i\). 
	
	Finally, the subnetwork \(F_3\) with input \((x_i, u_j, w_j)^\top \) implements a sequential bit-extraction procedure (a minor modification of \citet{Bartlett2019}) to recover the label corresponding to \(x_i\). It sequentially extracts the bit segments from \(u_j\) to identify the integer matching \(x_i\), then extracts the corresponding bit segments from \(w_j\) to retain the corresponding label when a match is confirmed. 
	
	Thanks to the spacing constraint on \(\{x_i\}_{i=1}^N\) guaranteed by \(F_1\), each projected samples \((x_i,y_i)\) corresponds to a unique \((u_j, w_j)\) pair in the encoding. As a result, the composite network \(F\) can correctly recover the label of any original input \(\Bx_i\). We now elaborate the detailed construction of these sub-networks.
	
	\textbf{Step 1: Construction of \(F_1\)}. We use the following lemma from \citet[Lemma A.2]{Vardi2022} to project the \(d\)-dimensional points \(\Bx_i\) to \(\bR\) and ensure that the norm of the projected points is bounded and that the distance between different points is at least \(2\). Note that our definition of depth is slightly different from \citet{Vardi2022}.
	
	\begin{lemma}\label{Lemma 2.3} Let \(\Bx_1...\Bx_N \in \bR^d \) satisfy \(\|\Bx_i\|\leq 1\) for every \(i \in[N]\) and \(\|\Bx_i-\Bx_j\|\geq\delta\) for every \(i\neq j\). Define \(R:= 10N^2\delta^{-1}\sqrt{\pi d}\). Then, there exists a neural network \( F_1 : \bR^d \to \bR \) with width 1 and depth 1,such that:
		\begin{enumerate}
			\renewcommand{\labelenumi}{(\theenumi)}
			\item  \( 0\leq F_1(\Bx_i) \leq R\) for every \(i\in[N]\);
			
			\item  \(|F_1(\Bx_i)-F(\Bx_j)|\geq 2\) for every \(i\neq j\).
		\end{enumerate}
		
	\end{lemma}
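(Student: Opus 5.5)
We will prove Lemma \ref{Lemma 2.3} by a random projection followed by an affine shift, realized as a single ReLU neuron. Concretely, we look for $F_1(\Bx)=\sigma(\Ba^\top\Bx+b)$, where $\Ba\in\bR^d$ is a suitably scaled direction and $b$ is a bias. This is a network of width $1$ and depth $1$, since the output layer is the identity up to an affine map.

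The first step is to find a unit vector $\boldsymbol{v}$ such that $|\boldsymbol{v}^\top(\Bx_i-\Bx_j)|\ge c\,\delta/(N^2\sqrt{d})$ for all $i\neq j$, with an explicit constant $c$. For $\boldsymbol{v}$ uniform on the sphere $\mathbb{S}^{d-1}$ and any fixed unit vector $\boldsymbol{e}$, the density of $\boldsymbol{v}^\top\boldsymbol{e}$ is proportional to $(1-t^2)^{(d-3)/2}$ on $[-1,1]$. Its normalizing constant is
\[
\frac{\Gamma(d/2)}{\sqrt{\pi}\,\Gamma((d-1)/2)}\le \sqrt{\frac{d}{2\pi}}.
\]
Hence $\Pr(|\boldsymbol{v}^\top\boldsymbol{e}|<\epsilon)\le 2\epsilon\sqrt{d/(2\pi)}$. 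We apply this with $\boldsymbol{e}=(\Bx_i-\Bx_j)/\|\Bx_i-\Bx_j\|$ and take a union bound over the $\binom N2<N^2/2$ pairs. The failure probability is then at most $N^2\epsilon\sqrt{d/(2\pi)}$, which is $<1$ once $\epsilon=\sqrt{2\pi}/(N^2\sqrt{d})$ times a constant slightly below $1$. Some realization $\boldsymbol{v}$ therefore satisfies
\[
|\boldsymbol{v}^\top(\Bx_i-\Bx_j)|\ge \epsilon\,\|\Bx_i-\Bx_j\|\ge \epsilon\delta.
\]
This is the main obstacle: the constants must be tracked carefully, namely the Gamma-ratio bound via Gautschi's inequality and the union bound, so that the resulting scale matches $R=10N^2\delta^{-1}\sqrt{\pi d}$ exactly. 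If the cleanest estimate gives a slightly different constant, we will choose $\epsilon$ with some slack, e.g.\ $\epsilon=1/(N^2\sqrt{\pi d})\cdot c_0$, and verify that $c_0$ fits within the factor $10$.

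The second step is rescaling and shifting. Set $\Ba=(2/(\epsilon\delta))\boldsymbol{v}$, so that pairwise gaps become $|\Ba^\top(\Bx_i-\Bx_j)|\ge 2$. Since $\|\Bx_i\|\le1$, we have $|\Ba^\top\Bx_i|\le 2/(\epsilon\delta)$. We therefore take $b=2/(\epsilon\delta)$, so that $\Ba^\top\Bx_i+b\in[0,4/(\epsilon\delta)]$. On the data the ReLU acts as the identity, so the gaps are preserved, and $0\le F_1(\Bx_i)\le 4/(\epsilon\delta)$. With $\epsilon$ as above, $4/(\epsilon\delta)$ is of order $N^2\sqrt{d}\,\delta^{-1}$, and the constant choice is made so that it is at most $R$. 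We will use the union-bound slack to absorb the factor $4$ versus $10\sqrt{\pi}$: it suffices that $\epsilon\ge 2/(5N^2\sqrt{\pi d})$, and the union bound tolerates $\epsilon$ up to roughly $\sqrt{2\pi}/(N^2\sqrt d)$, which is larger. This completes both properties of Lemma \ref{Lemma 2.3}.
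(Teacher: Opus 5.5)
Your proof is correct and is essentially the argument the paper relies on. The paper does not prove this lemma itself but imports it from \citet[Lemma A.2]{Vardi2022}, whose argument likewise picks a unit direction with $|\boldsymbol{v}^\top(\Bx_i-\Bx_j)|\gtrsim \delta/(N^2\sqrt{d})$ for all pairs (anti-concentration of a uniformly random direction plus a union bound), then rescales and shifts inside a single ReLU neuron. Your constants close with room to spare: taking $\epsilon=2/(5N^2\sqrt{\pi d})$ makes the failure probability at most $\sqrt{2}/(5\pi)<0.1$. The only small omission is that the bound $(1-t^2)^{(d-3)/2}\le 1$ and Gautschi's inequality need $d\ge 3$. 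For $d=2$ you can use $\Pr(|\boldsymbol{v}^\top\boldsymbol{e}|<\epsilon)=\tfrac{2}{\pi}\arcsin\epsilon\le\epsilon$ instead, and $d=1$ is trivial.
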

	For the simplicity of notation, we denote \(F_1(\Bx_i)\) as \(x_i\) for all \(i\in [N]\). The network \(F_1\) reduces the high-dimensional memorization problem into a one-dimensional and encodable form. The projected points are pairwise separated by at least \(2\), ensuring that \(\lfloor x_i \rfloor \neq \lfloor x_j \rfloor\) for all \(i\neq j\). Besides, all projected values lie within the bounded interval \([0, R]\), so the number of bits required to represent these values is finite, and this finiteness controls the subsequent networks' width and depth.
	
	\textbf{Step 2: Construction of \(F_2\)}. To facilitate the description of the subsequent encoding process for data points and labels, we first introduce the basic notation for binary bit strings and their corresponding numerical representation.
	For integer \(m\geq1\), a binary bit string of length \(m\) is an ordered string consisting of \(m\) bits, denoted as \((a_1a_2\ldots a_m)\), where each element \(a_i\) is either \(0\) or \(1\). We denote the corresponding value of this binary bit string as	
	\begin{equation}\label{integer binary representation}
		\BIN\left(a_1a_2\cdots a_m\right) := \sum_{i=1}^m 2^{m-i}a_i.
	\end{equation}
	For integers \(m \geq 1\), \(n \geq 1\), and two binary bit string \((a_1a_2\ldots a_m)\) and \((b_1b_2\ldots b_n)\), we can obtain the binary bit string \((a_1a_2\ldots a_m.b_1b_2\ldots b_n)\) and the corresponding value of it:
	\begin{equation}\label{binary representation}
		\BIN\left(a_1a_2\cdots a_m.b_1b_2\cdots b_n\right) := \sum_{i=1}^m 2^{m-i} a_i + \sum_{j=1}^{n} 2^{-j} b_j. 
	\end{equation}
	
	Note that we use \(a_i \in \{0,1\}\) (\(1 \leq i \leq m\)) to denote the \(i\)-th bit of the integer part (with \(a_1 \) as the most significant bit), and \(b_j \in \{0,1\}\) (\(1 \leq j \leq n\)) to denote the \(j\)-th bit of the fractional part (with \(b_1\) as the first bit after the decimal point).
	
	Based on the notation for binary bit strings defined above, we proceed to perform the block-wise encoding operation for the projected points and their corresponding labels. For the one-dimensional projected points \(\{x_i\}_{i=1}^N\) outputted by \(F_1\) with \(x_i \in [0,R]\) and their corresponding  labels \(\{y_i\}_{i=1}^N\) where \(y_i \in [C]\), we first partition the set of projected points and labels into consecutive blocks of size \(S\). If \(N\) is not a multiple of \(S\), then we add redundant samples \((x_{N+i},y_{N+i})\) where \(x_{N+i}=x_i\) and \(y_{N+i}=0\) for \(i\in [ S\lceil N/S\rceil-N ]\) to ensure that the size of final block is \(S\). For the \(j\)-th block, we encode all projected points in this block to form an integer \(u_j\) as follows. We take the integer part of these points and represent them as  \(\lceil \log R\rceil\)-bit binary strings by padding leading zeros if their original binary length is insufficient, then concatenate these strings in the block’s point order and convert the resulting long binary string to an integer \(u_j\) by the equation (\ref{integer binary representation}). We construct \(w_j\) for the \(j\)-th block analogously using the labels: convert each \(y_i\) in this block to a \(\lceil \log C\rceil\)-bit binary string (padding zeros if necessary), concatenate these binary strings and convert the result to an integer \(w_j\).
	
	Next, we construct a subnetwork \(F_2\) which associates each projected point \(x_i\) with the pre-encoded pair \(u_{j_i}\) and \(w_{j_i}\) where \(j_i\) denotes the index of the block containing \(x_i\). Specifically, \(F_2\) maps \(x_i\) to \((x_i, u_{j_i},w_{j_i})^\top\). This design embeds the dataset structure into fixed-length bit segments of network parameters, and provides the necessary discrete information required for the subsequent bit-extraction step. 
	
	Recall that ReLU neural networks can only represent continuous piecewise linear functions, so we first formalize notation for such functions as follows. For \(n \in \bN\), we use \(\PWL(n)\) to denote the set of continuous piecewise linear functions \(g: \bR\to\bR\) with at most \(n\) pieces, that is, there exist at most \( n + 1 \) points \( -\infty = t_0 \leq t_1 \leq \cdots \leq t_n = \infty \) such that \( g \) is linear on the interval \( (t_{i-1}, t_i) \) for all \( i = 1, \ldots, n \). The points, where \( g \) is not differentiable, are called breakpoints of \( g \). The following lemma from \citet[Lemma 4.2]{Yang2025} shows that any function in \( \PWL(n) \) can be represented by ReLU networks with specified width and depth.
	
	\begin{lemma}\label{Lemma 2.4} Let \( W, L, n \in \mathbb{N} \) and \( g \in \PWL(n+1) \). Assume that the breakpoints of \( g \)  lie within the bounded interval \( [\alpha, \beta] \). If \( n \leq 6W^2L \), then there exists \( f \in \mathcal{NN}(6W + 2, 2L) \) such that \( f = g \) on \( [\alpha, \beta] \).		
	\end{lemma}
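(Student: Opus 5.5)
The plan is to decompose $g$ into $L$ pieces, each with at most $6W^2$ breakpoints, and realize each piece by a two-hidden-layer block of width $6W$. The blocks are chained, with two extra channels carrying the input $x$ and a running partial sum through all $2L$ layers.

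\textbf{Splitting $g$.} Order the breakpoints $\alpha\le t_1\le\dots\le t_n\le\beta$. On $[\alpha,\beta]$ write
\[
g(x)=g(\alpha)+s_0(x-\alpha)+\sum_{i=1}^n c_i\,\sigma(x-t_i),
\]
where $s_0$ is the initial slope and $c_i$ is the jump of the slope at $t_i$. Split the index set into $L$ consecutive groups of at most $6W^2$ indices. Let $g_\ell$ be the sum of $c_i\sigma(x-t_i)$ over group $\ell$. Then $g=g(\alpha)+s_0(x-\alpha)+\sum_{\ell=1}^L g_\ell$ on $[\alpha,\beta]$, and each $g_\ell\in\PWL(6W^2+1)$ has breakpoints in $[\alpha,\beta]$. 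It therefore suffices to prove the one-block claim: every such $h\in\PWL(6W^2+1)$ agrees on $[\alpha,\beta]$ with a network of width $6W$ and depth $2$.

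\textbf{Assembling the blocks.} Given the one-block claim, build block $\ell$ (hidden layers $2\ell-1$ and $2\ell$) with $6W$ neurons computing $g_\ell$, plus two neurons. One neuron carries $\sigma(x-\alpha)$, which equals $x-\alpha$ on $[\alpha,\beta]$. The other carries $\sigma\big(M+\sum_{k<\ell}g_k(x)\big)$, where the constant $M$ is large enough that the argument stays nonnegative on $[\alpha,\beta]$. This is possible because all $g_k$ are bounded there, and then ReLU acts as the identity on both carried channels. Each block reads $x$ from the first channel and adds its output linearly into the second at the start of the next block. The final affine layer outputs $g(\alpha)+s_0(x-\alpha)+(\text{accumulated sum})-M$. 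This gives width $6W+2$ and depth $2L$, as claimed; the bookkeeping is routine.

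\textbf{The one-block claim (main obstacle).} Here the breakpoint count must be quadratic in the width. I would partition the $6W^2$ breakpoints into $6W$ consecutive groups of at most $W$ points each, with one second-layer neuron per group.

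First hidden layer: about $W$ neurons $\sigma(x-\tau_k)$ at shared coarse knots $\tau_k$, so that affine combinations of their outputs give any PWL function with knots in $\{\tau_k\}$. Each second-layer neuron has the form $\sigma(\phi_j(x))$, with $\phi_j$ an affine combination of the first-layer outputs. Choose $\phi_j$ as a zigzag whose zero crossings sit at the breakpoints of group $j$, so that $\sigma\circ\phi_j$ creates up to about $W$ new breakpoints at prescribed locations. A sum of $6W$ such neurons then has about $6W^2$ breakpoints.

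The difficulty is that the $\tau_k$ are shared by all groups, while each group's zero crossings and slope jumps must be prescribed exactly. I would handle this following the construction of Shen et al.\ and Yang et al. The first layer builds, from the fine breakpoints, an interpolating ``index'' function that is affine on each small interval. Each second-layer neuron uses it to reproduce only the local slope changes of its own group, so that $\sum_j \sigma(\phi_j)$ matches $h$ at all breakpoints. The number of neurons per layer then needs to be checked against the constant $6$. If exact prescription fails, the fallback is to let each block handle only $W^2$ breakpoints and absorb the loss into the constant. That fallback, however, would not give the stated constants $6W^2L$ and $6W+2$. The precise counting in this one-block step is where I expect the real work to lie.
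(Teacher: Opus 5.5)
Your outer reduction is sound. With $n\le 6W^2L$ you can split the ReLU expansion into $L$ groups of at most $6W^2$ terms. Chaining two-hidden-layer blocks of width $6W$, with one channel carrying $\sigma(x-\alpha)$ and one carrying $\sigma\bigl(M+\sum_{k<\ell}g_k\bigr)$, then gives exactly width $6W+2$ and depth $2L$. (The paper does not prove Lemma~\ref{Lemma 2.4} itself; it quotes it from \citet[Lemma 4.2]{Yang2025}.) However, your one-block claim is the entire content of the lemma, and you leave it unproved. Worse, the mechanism you sketch for it cannot work. Suppose group $j$ consists of $W$ consecutive breakpoints, all to be realized as zero crossings of one pre-activation $\phi_j$. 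Then $\phi_j$ is affine between first-layer knots, and an affine function with two zeros vanishes identically. So $\phi_j$ needs a knot between (or at) consecutive crossings, i.e.\ order $W$ knots inside the span of group $j$. For $6W$ \emph{consecutive} groups these spans are disjoint, so the shared first layer would need $\Omega(W^2)$ neurons, not $O(W)$. An ``index function'' built from the fine breakpoints has the same cost.

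The missing idea is to transpose the grouping. Cut the breakpoints into $p$ blocks of $q$ consecutive ones and put first-layer knots only in the gaps between blocks. Make second-layer neuron $k$ responsible for the $k$-th breakpoint $t_{b,k}$ of \emph{every} block $b$: its pre-activation is affine on each block, with a single zero at $t_{b,k}$ and slope of modulus $|c_{b,k}|$. Two points then need care, and they are where the constants come from.
\begin{itemize}
\item Since $\sigma$ is convex, $\gamma\sigma(\phi)$ only produces kinks of sign $\gamma$ at zero crossings. Positive and negative slope jumps therefore need separate neurons $\phi_k^{+},\phi_k^{-}$, giving $2q$ neurons.
\item $\phi$ must not cross zero at uncontrolled points inside a gap. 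Traverse each active block upward or downward so that $\phi$ keeps the sign inherited from the previous gap, and give inactive blocks a constant sign. Then two knots per gap suffice.
\end{itemize}
With this, $h-\sum_k\bigl(\sigma(\phi_k^{+})-\sigma(\phi_k^{-})\bigr)$ has kinks only at first-layer knots. It is therefore an affine combination of first-layer outputs on $[\alpha,\beta]$, and one more second-layer neuron $\sigma(M+\cdot)$ passes it through (or you can fold it into your accumulator). This uses $2p-1$ first-layer and $2q+1$ second-layer neurons. Taking $p=3W$, $q=3W-1$ handles $9W^2-3W\ge 6W^2$ breakpoints within width $6W$, which is exactly what your reduction requires.
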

	
	Leveraging this result, we now construct the ReLU network \(F_2\) to realize the desired block-encoded mapping.
	\begin{lemma}\label{Lemma 2.5}
		Let \(0\le x_1<...<x_N\le R\) with \(R>0\). Let \(S \in \bN\) satisfy \(S<N\), \(u_1,...,u_{\lceil N/S \rceil} \in \bN\) and \(w_1,...,w_{\lceil N/S \rceil} \in \bN\). Then, for any \(W_1,L_1 \in \bN\) such that \(
		3W_1^2 L_1\geq \lceil N/S \rceil\), there exists a neural network \(F_2:\bR \to \bR^3 \) with width \(12W_1+5\) and depth \( 2 L_1\) such that \(F_2(x_i)=(x_i, u_{\lceil i/S\rceil} , w_{\lceil i/S\rceil} )^\top\) for all \(i \in [N]\).	
	\end{lemma}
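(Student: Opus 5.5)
We build the two nontrivial coordinates of \(F_2\) as piecewise linear step-like functions and realize each one with Lemma \ref{Lemma 2.4}. The first coordinate, the identity \(x\mapsto x\), is carried along by a separate channel. Since the inputs \(x_i\) lie in \([0,R]\), we have \(x=\sigma(x)\) there, so a width-\(1\) chain of ReLUs of depth \(2L_1\) reproduces \(x_i\) exactly at the output.

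For the second coordinate we set \(M:=\lceil N/S\rceil\) and define \(g_u:\bR\to\bR\) by \(g_u(x)=u_1\) for \(x\le x_S\) and \(g_u(x)=u_M\) for \(x\ge x_{(M-1)S+1}\). On each gap \([x_{jS},x_{jS+1}]\) with \(j\in[M-1]\), \(g_u\) interpolates linearly from \(u_j\) to \(u_{j+1}\). Between these gaps \(g_u\) is constant. Since \(x_1<\dots<x_N\), the gaps are nondegenerate and disjoint, and points \(x_i\) with \(\lceil i/S\rceil=j\) lie in the constant region where \(g_u=u_j\). The function \(g_u\) is continuous with breakpoints \(x_{jS},x_{jS+1}\) for \(j\in[M-1]\), so it has at most \(2(M-1)\) breakpoints and \(2(M-1)+1\) pieces. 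We define \(g_w\) identically with \(w_j\) in place of \(u_j\).

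To apply Lemma \ref{Lemma 2.4} we need \(g_u\in\PWL(n+1)\) with \(n\le 6W_1^2L_1\). Taking \(n=2(M-1)\), the hypothesis \(3W_1^2L_1\ge M\) gives \(n=2M-2\le 6W_1^2L_1\). Lemma \ref{Lemma 2.4} therefore produces \(f_u\in\mathcal{NN}(6W_1+2,2L_1)\) with \(f_u=g_u\) on \([0,R]\), and likewise \(f_w\) for \(g_w\). The breakpoint count here is the only delicate step. It is why the transitions are placed exactly at consecutive block boundaries rather than using unit jumps, and it is what makes the factor \(3\) in the hypothesis suffice. The count works if the boundary indexing is taken with \(x_{jS}<x_{jS+1}\), which strict ordering guarantees.

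Finally, we concatenate the three networks by Proposition \ref{Proposition 2.2}(3). The identity channel has width \(1\) and depth \(2L_1\), so the result lies in \(\mathcal{NN}_{1,3}\bigl((6W_1+2)+(6W_1+2)+1,\,2L_1\bigr)=\mathcal{NN}_{1,3}(12W_1+5,2L_1)\). By construction it satisfies \(F_2(x_i)=(x_i,u_{\lceil i/S\rceil},w_{\lceil i/S\rceil})^\top\) for all \(i\in[N]\).

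One point needs checking: Lemma \ref{Lemma 2.4} asks for the breakpoints to lie in \([\alpha,\beta]=[0,R]\), and they do since \(0\le x_i\le R\). In the edge case \(M=1\), which is excluded since \(S<N\) forces \(M\ge2\), \(g_u\) would simply be constant.
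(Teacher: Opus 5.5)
Your proposal is correct and follows the paper's proof essentially step for step. Both use the same interpolants \(g_u,g_w\), with breakpoints at \(x_{jS},x_{jS+1}\) and constant values on each block, and both invoke Lemma \ref{Lemma 2.4} with \(n=2\lceil N/S\rceil-2\le 6W_1^2L_1\). Both then concatenate with a ReLU identity channel, which gives width \(12W_1+5\) and depth \(2L_1\). Your explicit depth-\(2L_1\) chain of ReLUs for the \(x\)-coordinate spells out what the paper leaves implicit in its appeal to Proposition \ref{Proposition 2.2}.
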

	
	\begin{proof}
		We define a piecewise linear function \(g_u\) by the following two properties.
		\begin{enumerate}\renewcommand{\labelenumi}{(\theenumi)}
			\item All the possible breakpoints of \(g_u\) are \(x_{tS}\) and \(x_{tS+1}\) for \(t\in [\lceil N/S \rceil -1]\). Furthermore, the function values of \(g_u\) at these points are given by
			\[
			g(x_{tS}) = u_t,\quad g(x_{tS+1})=u_{t+1},\quad t\in [\lceil N/S \rceil -1].
			\] 
			Then, \(g_u\) is well defined on \([x_S,x_{S\lceil N/S \rceil -S +1}]\) by the piecewise linear property.
			\item \(g_u\) is constant on \((-\infty,x_S]\) and \([x_{S\lceil N/S \rceil -S +1},\infty)\).
		\end{enumerate}
		It is easy to see that the above properties define a piecewise linear function \(g_u\) with at most \(2\lceil N/S \rceil -2\) breakpoints and hence \(g_u\in \PWL(2\lceil N/S \rceil -1)\). By Lemma \ref{Lemma 2.4}, for any \(W_1,L_1 \in \bN\) such that \(3W_1^2 L_1\geq \lceil N/S \rceil\), $g_u$ can be implemented by a neural network \(F_2^u \in \mathcal{NN}(6W_1 + 2, 2L_1) \) on the interval \([0,R]\). Notice that \(g_u\) is a constant on any interval \([x_{tS+1}, x_{(t+1)S}]\). It is easy to check that \(F_2^u(x_i)=g_u(x_i)=u_{\lceil i/S\rceil}\) for all \(i \in [N]\).
		
		For the string \(w_1,\ldots ,w_{\lceil N/S \rceil}\), we can similarly define a piecewise linear function \(g_w\) by using \(w_t\) instead of \(u_t\). By Lemma \ref{Lemma 2.4} again, we can get a neural network \(F_2^w \in \mathcal{NN}(6W_1 + 2, 2L_1) \) such that \(F_2^w(x_i)=g_w(x_i)=w_{\lceil i/S\rceil}\) for all \(i \in [N]\).
		
		Finally, we construct \(F_2\) by concatenating \(F_2^u\) and \(F_2^w\) as follows.
		\[F_2(x_i)=\begin{pmatrix}
			\sigma(x_i)\\ F_2^u(x_i)\\	F_2^w(x_i)  	
		\end{pmatrix}=\begin{pmatrix}
			x_i\\	u_{\lceil \frac{i}{S}\rceil}\\w_{\lceil \frac{i}{S}	\rceil} 
		\end{pmatrix}.\] 
		Since each \(x_i\) lies within \([0,R]\), we have \(\sigma(x_i)=x_i\). Thus, by Proposition \ref{Proposition 2.2}, \(F_2\) can be implemented by a network with width \(12W_1+5\) and depth \(2L_1\). 
	\end{proof}

	\textbf{Step 3: Construction of \(F_3\)}. In this step, we construct subnetwork \(F_3\), which leverages a bit-extraction technique. For the input \((x,u,w)^\top\), \(u\)'s binary bit string incorporates that of \(\lfloor x\rfloor\), and \(w\)'s binary bit string contains that of the label \(y\) corresponding to \(x\). The subnetwork \(F_3\) sequentially extracts bit segments from \(u\) to identify the integer that matches \(x\) and extracts the corresponding bit segments from \(w\) to retrieve the target label. Thanks to the pairwise spacing of \(\{x_i\}\) previously enforced in the projection step, \(F_3\) can uniquely and correctly recover the label for each \(x_i\).
	
	To formalize the bit-extraction operations employed in \(F_3\), we first define a mapping which extracts a specific range of bits from a binary bit string and converts them into an integer.
	For \(m\in \bN\) and integers \(1 \leq i \leq j \leq m\), we define a map \(\Gamma_{i:j}^{(m)}:\{  0,\cdots,2^{m-1}\}\to\bN\), which extracts the \(i\)-th to \(j\)-th bits of the input, as follows. For \(N\in\{  0,\cdots,2^{m-1}\}\), we can write \(N= \BIN (a_1a_2\cdots a_m)= \sum_{i=1}^m 2^{m-i} a_i\) and define
	\[
	\Gamma_{i:j}^{(m)}(N) = \BIN \left(a_ia_{i+1}\cdots a_j\right)=\sum_{k=i}^{j} 2^{j-k} a_k.
	\]
	For example, for \(N=1\), we have \(\Gamma_{1:1}^{(1)}(1)=1\) and \(\Gamma_{1:1}^{(2)}(1)=0\).
	
	With this bit-extraction mapping defined, we now present the following lemma to complete the construction of \(F_3\), which realizes sequential bit extraction and matching operations to accurately recover the label for each input. We defer the proof of this lemma to Section \ref{Section 2.2}, due to the complexity of this proof.
	\begin{lemma}\label{Lemma 2.6}
		Let \(\rho, c ,S,T \in \bN\). There exists a neural network \(F_3 : \bR^3 \to \bR\) with width \(4(2^{\lceil \rho /T\rceil}+2^{\lceil c/T\rceil}+1)\), depth \(S(T+3)\), such that if the input \((x,u,w)^\top\) satisfies the following two conditions:
		\begin{enumerate}     \renewcommand{\labelenumi}{(\theenumi)}
			\item \(u \in \bN\) has the binary string representation \(u = \BIN(a_1a_2\ldots a_{\rho S})\) and \(w \in \bN\) has the binary string representation  \(w = \BIN(b_1b_2\ldots b_{cS})\). For any \(\ell, k \in \{0, 1, \ldots, S-1\}\) with \(\ell \neq k\), the following separation condition holds
			\[
			\left| \Gamma^{(\rho S)}_{\rho \cdot \ell + 1: \rho \cdot (\ell + 1)}(u) - \Gamma^{(\rho S)}_{\rho \cdot k + 1: \rho \cdot (k + 1)}(u) \right| \geq 2.
			\]
			\item \(x \ge 0\) and there exists \(j \in \{0, 1, \ldots, S-1\}\) such that \(\lfloor x \rfloor = \Gamma^{(\rho S)}_{\rho \cdot j + 1: \rho \cdot (j + 1)}(u)\).
		\end{enumerate}
		then, 
		\[
		F_3 \left( (x,u,w)^\top \right) = \Gamma^{(cS)}_{c \cdot j + 1: c \cdot (j + 1)}(w). 
		\]		 
		
	\end{lemma}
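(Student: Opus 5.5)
We build $F_3$ as a composition of $S$ identical stages, one stage per slot of the block, each of depth $T+3$, so that the total depth is $S(T+3)$ by Proposition \ref{Proposition 2.2}(2). Each stage carries the state $(x, u^{(k)}, w^{(k)}, y^{(k)})$. Here $u^{(k)}$ is the fractional number $2^{-\rho k}u$ reduced modulo the already processed bits, i.e. the remaining bits of $u$ written as $\BIN(0.a_{\rho k+1}\cdots a_{\rho S})$; $w^{(k)}$ is defined analogously; and $y^{(k)}$ is an accumulator. First, a single layer rescales $u\mapsto 2^{-\rho S}u\in[0,1)$ and $w\mapsto 2^{-cS}w$. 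Then stage $k$ performs three tasks. It extracts the next $\rho$ bits $U_k=\Gamma^{(\rho S)}_{\rho k+1:\rho(k+1)}(u)$ and updates $u^{(k+1)}=2^\rho u^{(k)}-U_k$. It likewise extracts $V_k=\Gamma^{(cS)}_{ck+1:c(k+1)}(w)$ and updates $w^{(k+1)}$. Finally, it adds to the accumulator the quantity $V_k\cdot\mathbbm{1}\{\lfloor x\rfloor=U_k\}$.

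The bit extraction within one stage follows the Bartlett et al. technique, split into $T$ sub-steps. Each sub-step extracts $\lceil\rho/T\rceil$ bits of $u^{(k)}$ (and $\lceil c/T\rceil$ bits of $w^{(k)}$) via the floor of $2^{\lceil\rho/T\rceil}$ times the current fraction. Since the input lies in a finite dyadic grid $2^{-\rho S}\bZ$, this floor can be realized exactly with one hidden layer of about $2\cdot 2^{\lceil \rho/T\rceil}$ ReLUs, using steep ramps $\sigma(M(t-i/2^{r})+1)-\sigma(M(t-i/2^r))$ placed just below each threshold with slope $M=2^{\rho S}$. Accumulating these sub-results over the $T$ sub-steps gives $U_k$ and $V_k$. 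Width for the $u$-part is $\lesssim 2\cdot2^{\lceil\rho/T\rceil}$, for the $w$-part $\lesssim 2\cdot 2^{\lceil c/T\rceil}$. We also need a few pass-through channels for $x$, the accumulator and the partial sums; since all of these are nonnegative, identity maps $\sigma(t)=t$ carry them through. By concatenation (Proposition \ref{Proposition 2.2}(3)) the total width is at most $4(2^{\lceil\rho/T\rceil}+2^{\lceil c/T\rceil}+1)$. Some care is needed to fit the accumulating counts in this budget, which is the reason for the factor 4 rather than 2.

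The remaining 3 layers of each stage implement the matching. We use the separation hypothesis (1) together with the facts that $U_k$ and $\lfloor x\rfloor$ are integers and $x\ge0$. Let $\phi(t)=\sigma(1-|t|)$, implemented as $\sigma(1-\sigma(t)-\sigma(-t))$; it returns $1$ when $t=0$ and $0$ when $|t|\ge1$. Since $\lfloor x\rfloor$ itself is not directly available, apply $\phi$ to $x-U_k-\tfrac12$ scaled appropriately. Concretely, use the trapezoid $h(x-U_k)$ which equals $1$ on $[0,1)$ and $0$ outside $(-1,2)$. We have $x-U_j\in[0,1)$ for the matching index $j$. For $k\neq j$, hypothesis (1) gives $|U_k-U_j|\ge2$, hence $x-U_k\notin(-1,2)$. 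This is precisely why separation $2$, rather than $1$, is assumed: it permits a continuous indicator without computing $\lfloor x\rfloor$. The indicator is then multiplied by $V_k\in[0,2^c)$ via the standard trick $\mathbbm{1}\cdot V=\sigma(V-B(1-\mathbbm{1}))$ with $B=2^c$, which is exact because the indicator is exactly $0$ or $1$ at the relevant inputs. The result is added into the accumulator. After $S$ stages the accumulator equals $V_j=\Gamma^{(cS)}_{cj+1:c(j+1)}(w)$; the final affine output reads it off.

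The main obstacle is the exact realization of the chunked bit extraction with the stated width. One must ensure that the steep ramps are exact on the dyadic grid of inputs, and that carrying the partial sums and remainders across the $T$ sub-steps within a stage does not exceed $4(2^{\lceil\rho/T\rceil}+2^{\lceil c/T\rceil}+1)$ neurons. We address this first by writing the sub-step as an explicit one-hidden-layer block and counting neurons. The floor-by-ramps construction of $\lfloor 2^r t\rfloor$ for $t\in 2^{-m}\bZ\cap[0,1)$ uses $2\cdot 2^r$ neurons plus pass-throughs, which fits the budget.
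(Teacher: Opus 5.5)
Your proposal is correct and follows the paper's proof essentially step for step. Both use $S$ stages of depth $T+3$. In each stage, a $T$-layer Bartlett-type extractor pulls the next $\rho$ bits of $u$ and $c$ bits of $w$; the paper simply cites Lemma~\ref{Lemma 2.7} here. Separation $2$ then gives an exact $0/1$ indicator of $\lfloor x\rfloor=U_k$; the paper cites Lemma~\ref{Lemma 2.8} where you use the trapezoid. Finally, $\sigma\bigl(V_k-B(1-\text{indicator})\bigr)$ is added into an accumulator, with the paper taking $B=2^{c+1}$.

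Your hand-built steep-ramp extractor is a valid substitute and even leaves slack in width and depth. One detail should be stated explicitly: when $T\nmid\rho$ (or $T\nmid c$), choose the per-sub-step chunk sizes at most $\lceil\rho/T\rceil$ and summing to exactly $\rho$, so you do not overshoot into the next slot.
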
	
	
	The bit-extraction network proposed in \citet{Vardi2022} is subject to a strict fixed-width constraint.  All bit-extraction and matching operations are executed serially along the depth so that the depth is the sole resource for adjusting the network’s expressive power. Our Lemma \ref{Lemma 2.6} addresses this limitation by refining the bit-extraction mechanism. The central improvement in this step is that we remove the constant-width constraint in \citet{Vardi2022}, making width an adjustable resource and thereby explicitly introducing a quantifiable width–depth trade-off at the bit-extraction level.
	
	\textbf{The Complete Proof of Theorem \ref{Theorem 2.1}}. We now combine the previous three steps and give a proof of Theorem \ref{Theorem 2.1}. 
	\begin{proof}
		We begin by invoking Lemma \ref{Lemma 2.3} to construct a network \(F_1:\bR^d \to \bR\), for which the following properties hold: \(0\le F_1(\Bx_i)\leq R:= 10N^2\delta^{-1}\sqrt{\pi d}\) for all \(i \in [N]\),  and  \(|F_1(\Bx_i)-F_1(\Bx_j)|\geq2\) for all \(i\neq j\). For notational convenience, we denote the output of \(F_1\) on \(\Bx_i\) as \(x_i\) (\(i=1,\ldots,N\)). Without loss of generality, we assume \(\{x_i\}_{i=1}^N\) is in increasing order, otherwise, we reorder the indices. Note that the width of \(F_1\) is \(1\) and its depth is \(1\).
		
		Next, we introduce two integer sequences \(u_1,...,u_{\lceil N/S\rceil}\) and  \(w_1,...,w_{\lceil N/S \rceil}\) to encode the integer part of \(\{x_i\}_{i=1}^N\)
		and their associated labels \(\{y_i\}_{i=1}^N\) in a block-wise manner. Each \(u_i\)'s binary bit string comprises \(S\cdot\lceil\log R\rceil\) bits while each \(w_i\)  corresponds to a binary string of \(S\cdot \lceil \log C\rceil\). Specifically, we detail the constructive procedure of \(\{u_j\}_{j=1}^{\lceil N/S \rceil}\) as follows.
		\begin{enumerate}     \renewcommand{\labelenumi}{(\theenumi)}
			\item We first represent each \(\lfloor x_i \rfloor\) in terms of \(\lceil \log R\rceil\)-bit binary notation. If its binary representation has fewer than \(\lceil \log R\rceil\) bits, we pad leading zeros (i.e., on the most significant bit side) to meet the length requirement. 
			\begin{equation}\label{represent x_i}
				\lfloor x_i\rfloor=\BIN(a_{i,1} a_{i,2}\cdots a_{i,\lceil \log R\rceil})=\sum_{k=1}^{\lceil \log R \rceil}2^{\lceil \log R\rceil-k}a_{i,k}.
			\end{equation}
			For notational convenience, we denote the binary string of \(\lfloor x_i \rfloor\) as  \(s_i\), where  
			\begin{equation}\label{s_i}
				s_i:=a_{i,1}a_{i,2}\cdots a_{i,\lceil \log R\rceil}. 
			\end{equation}
			
			\item  We then partition \(\{x_i\}_{i=1}^N\) into blocks of size \(S\), and use all the points in the \(j\)-th block to construct \(u_j\), i.e., 
			\[
			u_j=u_j\left(\lfloor x_{(j-1)S+1}\rfloor,\lfloor x_{(j-1)S+2}\rfloor,\ldots,\lfloor x_{jS}\rfloor \right), \quad 1\leq j\leq \left\lceil N/S\right\rceil.
			\]
			If \(N\) is not a multiple of \(S\), we pad the last block with supplementary points to ensure that each block contains exactly \(S\) points. More precisely, we let \(x_{N+i}= x_i\) for \(i\in [S\left\lceil N/S\right\rceil - N]\). This choice of \(x_{N+i}\) is to ensure that the points in the last block are separated so that we can apply Lemma \ref{Lemma 2.6} below. To construct \(u_j\), we combine all the binary strings of \(\lfloor x_i \rfloor\) in the \(j\)-th block together:	
			\begin{equation}\label{u_j}
				u_j=\BIN \big(s_{(j-1)S+1}s_{(j-1)+2}\ldots s_{jS}      \big).
			\end{equation}
		\end{enumerate}
		
		Similarly, we use the labels \(\{y_i\}_{i=1}^N\) to construct \(\{w_j\}_{j=1}^{\lceil N/S\rceil}\). \begin{enumerate}     \renewcommand{\labelenumi}{(\theenumi)}
			\item We represent each \(y_i\) as a \(\lceil \log C\rceil \)-bit binary string by padding leading zeros to its binary representation if necessary.
			\begin{equation}\label{represent y_i}
				y_i=\BIN(b_{i,1}b_{i,2}\cdots b_{i,\lceil\log C\rceil})=\sum_{k=1}^{\lceil \log C \rceil}2^{\lceil \log C\rceil-k}b_{i,k}.
			\end{equation}
			For notational convenience, we denote the binary string of \(y_i \) as \(t_i\), where 
			\begin{equation}\label{t_i}
				t_i:=b_{i,1}b_{i,2}\cdots b_{i,\lceil \log C\rceil}. 
			\end{equation}
			
			\item We partition \(\{y_i\}_{i=1}^N\) into blocks of size S, and use points in the \(j\)-th block to construct \(w_j\). If \(N\) is not a multiple of \(S\), we append supplementary labels \(y_k=0\) to ensure that each block contains exactly \(S\) labels. Specifically, we concatenate the binary strings \(t_i\) of \(y_i\) in the \(j\)-th block to construct \(w_j\) as
			\begin{equation}\label{w_j}
				w_j=\BIN \big(  t_{(j-1)S+1}t_{(j-1)+2}\ldots t_{jS}   \big).
			\end{equation}
		\end{enumerate}
		By construction, for any projected point \(x_i\) and its corresponding label \(y_i\), if let \( j_i := \lceil i/S \rceil\), which is \(\lfloor x_i\rfloor\)'s block index, and \( k: =i(\mod S)\), which is \(\lfloor x_i\rfloor\)'s position within block, then we can approximately recover the sample from \(u_{j_i}\) and \(w_{j_i}\) as follows
		\begin{equation}\label{x_i corresponds to u_{j_i}}
			\lfloor x_i \rfloor=\Gamma^{(S\lceil \log R\rceil)}_{k \cdot \lceil\log R\rceil + 1: (k+1) \cdot \lceil\log R\rceil}(u_{j_i})=\BIN(s_i),
		\end{equation}
		\begin{equation}\label{y_i corresponds to w_{j_i}}
			y_i= \Gamma^{(S\lceil \log C \rceil)}_{k \cdot \lceil\log C\rceil + 1: (k+1) \cdot \lceil\log C\rceil}(w_{j_i})=\BIN(t_i).
		\end{equation}
		
		Since  \(u_i, w_i\in\bN\), \(0\le x_i\le R\) for any \(i\in [N]\) and \(| \lfloor x_i \rfloor- \lfloor x_j \rfloor|\geq 2\) for all \(i\neq j\). For any \(W_1,L_1\in\bN\) with \(3W_1^2L_1\geq \lceil N/S \rceil\), the conditions of Lemma \ref{Lemma 2.5} are satisfied.  Thus, we can invoke Lemma \ref{Lemma 2.5} to construct network \(F_2\) such that 
		\begin{equation}\label{subnetwork F_2}
			F_2(x_i)=\begin{pmatrix}
				x_i\\u_{j_i}\\	w_{j_i} 
			\end{pmatrix}.
		\end{equation}
		The width of \(F_2\) is \( 12W_1+5\) and its depth is \(2L_1\).  
		
		Next, by the construction of \(\{u_j\}_{j=1}^{\lceil N/S \rceil}\) and \(\{w_j\}_{j=1}^{\lceil N/S \rceil}  \), we have 
		\[
		u_j=\BIN \big(s_{(j-1)S+1}s_{(j-1)S+2}\ldots s_{jS}\big),
		\] 
		where \(s_{(j-1)S+q}=a_{(j-1)S+q,1}\ldots a_{(j-1)S+q,\lceil \log R\rceil}\), and 
		\[
		w_j=\BIN \big(t_{(j-1)S+1}(t_{(j-1)S+2}\ldots t_{jS} \big),
		\] 
		where \(t_{(j-1)S+q}=b_{(j-1)S+q,1}\ldots b_{(j-1)S+q,\lceil \log C\rceil}\), for each \(j \in [\lceil N/S \rceil], q\in [S]\). By construction, for any \(l,m\in \{0,\ldots S-1\}\) with \(l\neq m\), we have 
		\[
		\left| \Gamma^{(S\lceil \log R\rceil)}_{m \cdot \lceil\log R\rceil + 1: (m+1) \cdot \lceil\log R\rceil}(u_j)-\Gamma^{(S\lceil \log R\rceil)}_{l \cdot \lceil\log R\rceil + 1: (l+1) \cdot \lceil\log R\rceil}(u_j) \right|\geq 2,
		\] 
		which satisfies the input condition \((1)\) of Lemma \ref{Lemma 2.6}. Furthermore, by equation \ref{x_i corresponds to u_{j_i}}, for \( k =i(\mod S)\) (i.e., the position of \(x_i\) within its block \(j_i=\lceil i/S \rceil\)), we have \(x_i=F_1(\Bx_i)\ge 0\) and \( \lfloor x_i \rfloor=\Gamma^{(S\lceil \log R\rceil)}_{k \cdot \lceil\log R\rceil + 1: (k+1) \cdot \lceil\log R\rceil}(u_{j_i})\), satisfying the input condition \((2)\) of \ref{Lemma 2.6}. Thus, the output of \(F_2\) satisfies all input requirements of Lemma \ref{Lemma 2.6}, allowing us to directly invoke this lemma to construct the extractor subnetwork \(F_3\).  For the input \((x_i,u_{j_i},w_{j_i})^\top\) and the unique index \(k\), we obtain  
		\begin{equation}\label{subnetwork F_3}
			F_3\left(\begin{pmatrix}
				x_i\\u_{j_i}\\w_{j_i}
			\end{pmatrix}\right)=\Gamma^{(S\lceil \log C\rceil)}_{k\cdot\lceil \log C\rceil +1:(k+1)\cdot \lceil\log C\rceil}(w_{j_i})=y_i.
		\end{equation}
		As guaranteed by Lemma \ref{Lemma 2.6}, the width of \(F_3\) is \( 4(2^{\lceil \log R /T\rceil}+ 2^{\lceil \log C /T\rceil}+1)\) and its depth is \(  S(T+3) \). Finally, we can define the composite network \(F:=F_3\circ F_2 \circ F_1(\Bx): \bR^d\rightarrow \bR\) such that \(F(\Bx_i)=y_i\) for any \(i\in[N]\).
		
		Having constructed the network \(F\), we can determine its width \(W\) and depth \(L\) by Proposition \ref{Proposition 2.2}: 
		\begin{equation}\label{width of F}
			W=max\left\{12W_1+5,  4\left(2^{\lceil \log R /T\rceil}+ 2^{\lceil \log C /T\rceil}+1\right) \right\},
		\end{equation}
		\begin{equation}\label{depth of F}
			L=1+2L_1+S(T+3),
		\end{equation}
		where \(W_1\) and \(L_1\) are respectively the width and depth parameters of the subnetwork \(F_2\), \(S\) is the sample block size and \(T\) is a parameter of \(F_3\).
		
		Now, we choose \(L_1=S(T+3)\), and recall that Lemma \ref{Lemma 2.4} imposes the constraint that \(3W_1^2L_1\geq \lceil N/S \rceil\). From this constraint, we may choose \(W_1=\lceil \sqrt{N}/S\sqrt{T+3}  \rceil\).  Substituting this to (\ref{width of F}) yields the following expression for the width of \(F\): 
		\[
		W=max\left\{12 \left\lceil  \frac{\sqrt{N}}{S\sqrt{T+3}} \right\rceil+5, 4\left(2^{\lceil \log R /T\rceil}+ 2^{\lceil \log C /T\rceil}+1\right) \right\}.
		\] 
		Finally, with the choice \(L_1=S(T+3)\) in (\ref{depth of F}), the depth of \(F\) is \(L=3S(T+3)+1\), which completes the proof.
	\end{proof}
	
	\subsection{Proof of Lemma \ref{Lemma 2.6}}\label{Section 2.2}
	To prove Lemma \ref{Lemma 2.6}, let us first introduce two key auxiliary lemmas. The following lemma, which is from \citet[Lemma 4.3]{Yang2025} and is a minor modification of \citet[Lemma 13]{Bartlett2019}, provides the essential technique we use to realize the bit extraction. Recall that we define the value of binary bit string by (\ref{binary representation}).
	\begin{lemma}\label{Lemma 2.7}
		Let \( m, n \in \bN \) with \( m \leq n \). There exists \( f_{n,m,L} \in \mathcal{NN}_{1,2}(2^{\lceil m/L \rceil+ 2} + 2, L )\) such that, for any \(b= \BIN (0.b_1 \cdots b_n) \), \( b_i \in \{0, 1\} \), and any \( L \in \bN \), 
		\[
		f_{n,m,L}(b) = (\BIN (b_1 \cdots b_m.0), \BIN(0.b_{m+1} \cdots b_n))^\top.
		\]
	\end{lemma}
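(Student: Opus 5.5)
We prove Lemma \ref{Lemma 2.7} by adapting the construction of \citet[Lemma 13]{Bartlett2019}. We peel off the first $m$ bits of $b$ in $L$ rounds, each round extracting a chunk of at most $r:=\lceil m/L\rceil$ bits. Write $m=r_1+\cdots+r_L$ with every $r_\ell\le r$ (some $r_\ell$ may be zero). We keep two running quantities: the integer $z_\ell=\BIN(b_1\cdots b_{s_\ell}.0)$ of bits extracted so far, and the remainder $\beta_\ell=\BIN(0.b_{s_\ell+1}\cdots b_n)$, where $s_\ell=r_1+\cdots+r_\ell$. Both are nonnegative, so a ReLU carries them forward unchanged, at the cost of $2$ neurons.

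The core is a single-round gadget. It takes $\beta\in[0,1)$ of the form $\BIN(0.c_1\cdots c_k)$ with $k\le n$ and returns $q=\BIN(c_1\cdots c_{r'})=\lfloor 2^{r'}\beta\rfloor$ together with the new remainder $2^{r'}\beta-q$, using one hidden layer of width about $2^{r'+2}$. The map $t\mapsto\lfloor t\rfloor$ on $[0,2^{r'})$ is a step function with $2^{r'}-1$ jumps. The inputs are dyadic with at most $n$ fractional bits, so $2^{r'}\beta$ never lies in $(j-2^{-(n-r')},j)$ for any integer $j$. Each jump can therefore be replaced by a steep ramp
\[
\sigma\bigl(2^{n}(\beta-j2^{-r'})+1\bigr)-\sigma\bigl(2^{n}(\beta-j2^{-r'})\bigr),
\]
which equals $1$ if $\beta\ge j2^{-r'}$ and $0$ if $\beta\le j2^{-r'}-2^{-n}$. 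This costs $2$ neurons per jump. Summing these ramps gives $q$ exactly on all admissible inputs, and $2^{r'}\beta-q$ is then an affine combination of the same neurons together with a ReLU copy of $\beta$. Pass-through neurons for $z$ are added alongside.

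The output of a round enters the next hidden layer only through affine maps, so the last affine step can be merged into the next layer. Concretely, $z_{\ell}=2^{r_\ell}z_{\ell-1}+q_\ell$ and the new $\beta_\ell$ are affine in the layer-$\ell$ neurons, and they feed directly into the ramps of round $\ell+1$. One hidden layer per round therefore suffices, giving depth exactly $L$. The width per layer is at most $2(2^{r}-1)+2\le 2^{r+2}+2$, which fits the bound. The final output affine map $A_L$ produces $(z_L,\beta_L)^\top=(\BIN(b_1\cdots b_m.0),\BIN(0.b_{m+1}\cdots b_n))^\top$. Rounds with $r_\ell=0$ are pure pass-throughs, and Proposition \ref{Proposition 2.2} (inclusiveness) pads the width up to the stated value.

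The main obstacle is exactness at the breakpoints. The ramp steepness must be tied to the finite bit length $n$ so that dyadic inputs never land inside a transition region. This requires checking that after each round the remainder is again a dyadic rational with at most $n-s_\ell$ bits, which holds because the gadget is exact on admissible inputs.
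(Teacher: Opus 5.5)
Your proposal is correct and takes essentially the same approach as the source the paper relies on. The paper gives no proof of its own; it imports the lemma from \citet[Lemma 4.3]{Yang2025}, a modification of \citet[Lemma 13]{Bartlett2019}, whose construction is the one you reproduce: one hidden layer per round extracting at most $\lceil m/L\rceil$ bits with ramps of slope $2^n$ that are exact on $n$-bit dyadic inputs, chained $L$ times with nonnegative pass-through channels. Your width count of $2(2^{\lceil m/L\rceil}-1)+2=2^{\lceil m/L\rceil+1}$ is in fact slightly better than the stated bound $2^{\lceil m/L\rceil+2}+2$.
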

	We also need a lemma to verify the matching relationship between the extracted data segments and the input. The following Lemma from \citet[Lemma A.8]{Vardi2022} constructs a simple network to determine whether the integer part of the projected input falls within a specific interval.
	\begin{lemma}\label{Lemma 2.8}
		There exists a network \(F : \bR^2 \to \bR\) with width \(2\) and depth \(2\) such that 
		\[
		F\left( \begin{pmatrix} x \\ y \end{pmatrix} \right) = \begin{cases}
			1 & x \in [y, y+1], \\
			0 & x > y + \frac{3}{2} \text{ or } x < y - \frac{1}{2}.
		\end{cases}
		\]
	\end{lemma}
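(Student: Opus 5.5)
The plan is to reduce Lemma \ref{Lemma 2.8} to a one-dimensional trapezoid-shaped function of the difference $z := x-y$. A trapezoid is easy to realize with ReLU units. The target is a continuous piecewise linear $h:\bR\to\bR$ with $h(z)=1$ on $[0,1]$ and $h(z)=0$ for $z<-\tfrac12$ or $z>\tfrac32$. Setting $F((x,y)^\top) = h(x-y)$ then gives exactly the required behaviour, since $x\in[y,y+1]$ iff $z\in[0,1]$, $x>y+\tfrac32$ iff $z>\tfrac32$, and $x<y-\tfrac12$ iff $z<-\tfrac12$.

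To stay within width $2$, I would avoid the naive four-ReLU sum $\sigma(2z+1)-\sigma(2z)-\sigma(2z-2)+\sigma(2z-3)$, which is a width-$4$, depth-$1$ realization. Instead I would use a second layer to clip from above:
\[
h(z) = \sigma\bigl(1 - \sigma(-2z) - \sigma(2z-2)\bigr).
\]
The first hidden layer has two neurons, $\sigma(2y-2x)$ and $\sigma(2x-2y-2)$, both affine in the input $(x,y)^\top$ followed by ReLU. The second hidden layer has a single neuron $\sigma(1 - a - b)$, where $a,b$ are the first-layer outputs. The output layer is the identity map. This gives width $2$ and depth $2$ in the paper's convention, where $L$ counts hidden layers and the output layer is affine.

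The verification is a three-case check on $z$:
\begin{enumerate}
\item For $z\in[0,1]$: both $-2z\le 0$ and $2z-2\le 0$, so the inner expression equals $1$ and $h(z)=1$.
\item For $z<-\tfrac12$: we have $\sigma(-2z) = -2z > 1$ and $\sigma(2z-2)=0$, so the inner expression is negative and $h(z)=0$.
\item For $z>\tfrac32$: we have $\sigma(2z-2)=2z-2>1$ and $\sigma(-2z)=0$, so again $h(z)=0$.
\end{enumerate}
On the transition regions $[-\tfrac12,0)$ and $(1,\tfrac32]$ nothing is required, and $h$ takes values in $[0,1]$ there.

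There is no real obstacle. The only point needing care is meeting the width-$2$ budget, and this is why the function is composed (the outer ReLU acts as a cutoff) rather than written as a sum of ramps.
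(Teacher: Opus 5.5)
Your construction is correct. With $z=x-y$, the network $\sigma\bigl(1-\sigma(2y-2x)-\sigma(2x-2y-2)\bigr)$ has hidden widths $2$ and $1$, so it has width $2$ and depth $2$ in the paper's convention, and your three-case check on $z$ is right.

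The paper does not prove this lemma; it imports it from \citet[Lemma A.8]{Vardi2022}. Your argument therefore supplies the explicit network behind that citation. It also gives $F=0$ exactly at $x-y=-\tfrac12$ and $x-y=\tfrac32$, so it covers the non-strict form of the zero condition ($x\ge s_0+\tfrac32$ or $x\le s_0-\tfrac12$) that the proof of Lemma~\ref{Lemma 2.6} states.
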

	With these two auxiliary tools, one for bit extraction and the other for interval verification, we now proceed to prove Lemma \ref{Lemma 2.6}:
	\begin{proof}
		First, we scale \(u\) and \(w\) to the interval \([0,1)\) via linear transformation to align with the input requirement of Lemma \ref{Lemma 2.7}.
		Since \(u\) can be represented by \(\rho\cdot S\) binary bits and  \(w\) can be represented by \(c\cdot S \) binary bits, we have \(u/2^{\rho S}=\BIN (0. a_1\cdots a_{\rho S})\) and \(w/2^{cS}=\BIN (0.b_1\cdots b_{cS})\). Besides, we allocate a channel to store the label information. In other words, we rescale the input by the linear map
		\[ \begin{pmatrix} 
			x \\ 
			u \\ 
			w 
			
		\end{pmatrix}
		\longrightarrow
		\begin{pmatrix} 
			x \\ 
			\frac{u}{2^{\rho S}}\\
			\frac{w}{2^{cS}} \\ 
			0 
		\end{pmatrix}.
		\]
		
		Next, for any input \((x,u,w)^\top\) satisfying the conditions of the Lemma, we denote \(s_j:=\Gamma^{(\rho S)}_{\rho\cdot j+1:\rho\cdot (j+1)}(u)\) for \(j  \in \{0, 1, \ldots, S-1\} \). By assumption, \(\{ s_j \}\) are pairwise separated by at least two, hence the half-open intervals \(\{  [s_j, s_j+1)\}_{j=0}^{S-1}\) are disjoint. Likewise, define \(t_j:=\Gamma^{(cS)}_{cj+1:c(j+1)}(w)\) for the \(c\)-bit blocks of \(w\). We now construct two families of subnetworks \( \{  f_{\rho S,\rho,T}^u \} \) and \(\{    f_{cS,c,T}^w \}\) to extract the binary strings \(\{s_j\}_{j=0}^{S-1}\) and \(\{t_j\}_{j=0}^{S-1}\) from the input. By Lemma \ref{Lemma 2.7}, for any \(T \in\bN\), there exist networks 
		\[
		f_{\rho S,\rho,T}^u \in \mathcal{NN}(2^{\lceil \rho /T\rceil+2}+2, T )
		\]
		such that\(f_{\rho S,\rho,T}^u(u/2^{\rho S})= (s_0, \BIN (0.a_{\rho+1}\cdots a_{\rho S} ))^\top \),
		and 
		\[
		f_{cS,c,T}^w \in \mathcal{NN}(2^{\lceil c/T\rceil+2}+2, T )
		\] 
		such that \(f_{cS,c,T}^w(w/2^{cS})= (t_0, \BIN (0.b_{c+1}\cdots b_{cS}) )^\top \). Then, we apply \(  f_{\rho S,\rho,T}^u \) and \(  f_{cS,c,T}^w\) to extract \(s_0\) and \(t_0\) from the second and third component respectively:
		\begin{equation}\label{extract s_0 and t_0}
			\begin{pmatrix} 
				x \\ 
				\frac{u}{2^{\rho S}}\\
				\frac{w}{2^{cS}} \\ 
				0 
			\end{pmatrix}
			\longrightarrow		
			\begin{pmatrix} 
				x \\ 
				s_0 \\ 
				\BIN  (0. a_{\rho+1} \cdots a_{\rho S}) \\ 
				t_0 \\ 
				\BIN  (0. b_{c+1} \cdots b_{cS}) \\ 
				0 
			\end{pmatrix}
			\in \mathcal{NN}(2^{\lceil \rho /T\rceil+2}+2^{\lceil c/T\rceil+2}+4,T). 
		\end{equation}
		Next, we verify whether \(	\lfloor x \rfloor \) matches the extracted sample segment \(s_0\). If matched, the corresponding \(t_0\) will be retained. By Lemma \ref{Lemma 2.8}, there exists a network \(H:\bR^2 \to \bR\) with width 2 depth 2 such that
		\[
		H\left( \begin{pmatrix}  x \\ s_0 \end{pmatrix} \right) = \begin{cases}
			1 & x \in [s_0,s_0+1), \\
			0 &x  \geq s_0+\frac{3}{2} \text{ or }  x  \leq s_0-\frac{1}{2}.
		\end{cases}
		\] 
		Applying the network \(H\), we have
		\begin{equation}\label{verify x and s_0}
			\begin{pmatrix} 
				x \\ 
				s_0 \\ 
				\BIN  (0. a_{\rho+1} \cdots a_{\rho S}) \\ 
				t_0 \\ 
				\BIN  (0. b_{c+1} \cdots b_{cS}) \\ 
				0 
			\end{pmatrix} \longrightarrow		
			\begin{pmatrix} 
				x \\ 
				H\left( \begin{pmatrix} x \\s_0 \end{pmatrix}\right)\\
				\BIN (0. a_{\rho +1} \cdots a_{\rho S}) \\ 
				t_0 \\ 
				\BIN (0. b_{c+1} \cdots b_{cS}) \\ 
				0
			\end{pmatrix}\in \mathcal{NN}(2,2).
		\end{equation}
		Let us denote the output of \(H\) by \( \tilde{y}_0 \) and consider the following single-layer network:
		\[
		\phi :
		\begin{pmatrix}
			\tilde{y}_0 \\
			t_0
		\end{pmatrix}
		\mapsto \sigma\left( \tilde{y}_0 \cdot 2^{c+1} - 2^{c+1} +t_0 \right).
		\]
		Note that \( \tilde{y}_0\) can only take values \(0\) or \(1\), which is guaranteed by the pairwise separation condition of \(s_i\) (i.e., \(|s_i-s_j|\geq 2 \) for all \(i\neq j\)) and the input constraints that the integer part of \(x\) must equal to some \(s_j\). If it matches \(s_0\), then \(x\in [s_0,s_0+1)\) and \(H\) outputs \( \tilde{y}_0=1\), otherwise, it matches some other \(s_j\), then \(x\) lies outside \((s_0-1/2,s_0+3/2)\) and \(H\) outputs \( \tilde{y}_0=0\). If \( \tilde{y}_0 = 1 \), the output of \(\phi\) is \( t_0=\Gamma^{(cS)}_{0 \cdot c + 1: 1\cdot c}(w) \), otherwise, its output is \( 0 \) since \(t_0= \Gamma^{(cS)}_{0 \cdot c + 1: 1 \cdot c}(w) \leq 2^c \). Then, by applying \(\phi\), we can retain the label information as follows
		\begin{equation}\label{apply phi to retain label}
			\begin{pmatrix} 
				x \\ 
				\tilde{y}_0\\
				\BIN (0. a_{\rho +1} \cdots a_{\rho S}) \\ 
				t_0 \\ 
				\BIN (0. b_{c+1} \cdots b_{cS}) \\ 
				0
			\end{pmatrix}
			\longrightarrow
			\begin{pmatrix} 
				x \\ 
				\BIN (0. a_{\rho +1} \cdots a_{\rho S}) \\ 
				\BIN (0. b_{c+1} \cdots b_{cS}) \\ 
				0+\phi \left(\begin{pmatrix}
					\tilde{y}_0 \\
					t_0
				\end{pmatrix}\right)
			\end{pmatrix} \in \mathcal{NN}(1,1). 
		\end{equation}
		
		Then, we continue the bit extraction and verification steps above to realize the map
		\begin{equation}\label{bit exraction and verification}
			\begin{pmatrix} 
				x \\ 
				\BIN (0. a_{\rho +1} \cdots a_{\rho S}) \\ 
				\BIN (0. b_{c+1} \cdots b_{cS}) \\ 
				0+\phi \left(	\begin{pmatrix}
					\tilde{y}_0 \\
					t_0
				\end{pmatrix}\right)
			\end{pmatrix}
			\longrightarrow
			\begin{pmatrix} 
				x \\ 
				\BIN (0. a_{2\rho +1} \cdots a_{\rho S}) \\ 
				\BIN (0. b_{2c+1} \cdots b_{cS}) \\ 
				0+\phi\left(\begin{pmatrix}
					\tilde{y}_0 \\
					t_0
				\end{pmatrix}\right)
				+\phi\left(\begin{pmatrix}
					\tilde{y}_1 \\
					t_1
				\end{pmatrix}\right)
			\end{pmatrix},  
		\end{equation}
		where \(\tilde{y}_1=H(x,s_1)\). By Proposition \ref{Proposition 2.2} and the width and depth of the networks \(  f_{\rho S,\rho,T}^u ,  f_{cS,c,T}^w, H \) and \(\phi\), it is easy to see that the map (\ref{bit exraction and verification}) is in \(\mathcal{NN}(2^{\lceil \rho /T\rceil+2}+2^{\lceil c/T\rceil+2}+4 ,T+3)\). By repeating \(S-1\) copies of the network in (\ref{bit exraction and verification}) and then use an affine map to select the last component, the target label information corresponding to \(x\) is eventually recovered since 
		\[	
		\sum\limits_{i=0}^{S-1}\phi \left(	\begin{pmatrix}
			\tilde{y}_i \\
			t_i
		\end{pmatrix}\right)=\Gamma^{(cS)}_{c \cdot j + 1: c \cdot (j + 1)}(w),
		\] 
		if \(x\) falls in the interval \([s_j,s_{j+1})\) for some \(j \in \{0, 1, \ldots, S-1\}\). Consequently, we get the desired network \(F_3\) and through Proposition \ref{Proposition 2.2}, we can calculate that its width is \( 4(2^{\lceil \rho /T\rceil}+2^{\lceil c /T\rceil}+1)\) and its depth is \(S(T+3)\).
	\end{proof}

	\section{Optimality}\label{Section 3}
	In Theorem \ref{Theorem 2.1}, we consider \(N\) labeled samples \((\Bx_1,y_1),\ldots,(\Bx_N,y_N) \in\bR^d\times \bR \) satisfying two key conditions: 
	\begin{enumerate}
		\renewcommand{\labelenumi}{(\theenumi)}
		\item \(y_i \in [C]\) for all \(i\in [N]\).
		\item  \( \|\mathbf{x}_i\| \leq 1 \) for every \( i \in [N]\) and \( \|\Bx_i - \Bx_j\| \geq \delta \) whenever \( i \neq j \).
	\end{enumerate}
	We have shown that these samples can be memorized by a neural network whose width \(W\) and depth \(L\) satisfying
	\begin{equation}\label{upper bound in Section 3}
		W^2L^2\lesssim N(\log(\delta^{-1})+\log C).
	\end{equation}
	In this section, we discuss under what conditions this bound is sharp. 
	
	We first consider condition (1) that the labels only take finitely many values. If instead the labels are not discrete, say \(y_i\in [a,b]\) for some \(a,b\in \bR\) with \(a<b\), then one can show that any fully connected ReLU neural network which can memorize \(N\) data points \(\{(\Bx_i,y_i)\}_{i=1}^N\) must satisfy \(W^2L\gtrsim N\) when \(L\geq 2\) \citep[Proposition 6.6]{Yang2022}. To see this, let \( f_\theta\) be a ReLU neural network with \(P\) parameters \(\theta \in \bR^P\). For fixed \(\{\Bx_i\}_{i=1}^N\), if \(f_\theta\) can memorize any \(\{(\Bx_i,y_i)\}_{i=1}^N\), then the mapping \( F(\theta)=(f_\theta(\Bx_1),\ldots,f_\theta(\Bx_N))    \) is surjective. Notice that \(F\) is a piecewise polynomial of \(\theta\), and is thus Lipschitz on any closed ball. Consequently, the Hausdorff dimension of the image under \(F\) of any closed ball is at most \(P\) \citep[Theorem 2.8]{Evans2015}. Moreover, since \(\bR^N=F(\bR^P)\) is a countable union of images of closed balls, the Hausdorff dimension of \(\bR^N\) is at most \(P\), which implies \(N\leq P\). For fully connected network with depth \(L\geq 2\), we have \(P\asymp W^2L\), and hence \(W^2L\gtrsim
	N\).
	
	Additionally, recall from Lemma \ref{Lemma 2.4} that a network with width \(6W+2\) and depth \(2L\) satisfying \(6W^2L\geq N\) suffices to memorize arbitrary real values for any set of \(N\) distinct points. Therefore, \(W^2L\lesssim N\) is sufficient for memorizing these \(N\) points.  Combining the above two bounds, when we only know the labels \(\{y_i\}_{i=1}^N\) lie in some interval, the minimal size of the network required for memorizing these \(N\) points satisfies \(W^2L\asymp N\), which means that the bound on \(W^2L\) is sharp under this condition.
	
	Note that the construction mechanism of the network in Theorem \ref{Theorem 2.1} provides a direct explanation for why discrete labels are necessary. The encoding and bit-extraction machinery employed  in Sections \ref{Section 2} relies on representing each label \(y_i\) as a finite and uniformly bounded binary string, concatenating these strings to obtain the integers \(w_j\), and then performing sequential bit extraction (Lemma \ref{Lemma 2.6}). This encoding scheme is well-defined if and only if each label \(y_i\) admits a binary representation of uniformly bounded length. If \(\{y_i\}_{i=1}^N\) take continuous values within a bounded interval, any finite-bit representation of these labels will inevitably introduce approximation error. Therefore, the exact memorization requires either infinitely many bits (which is impractical) or a distinct network construction that directly uses \(\Theta(N)\) parameters, i.e., \(W^2L\asymp N\) by Lemma \ref{Lemma 2.4}. 
	
	Next, under the condition that labels are discrete, i.e., \(y_i \in [C]\) for all \(i \in [N]\), we analyze how the pairwise separation distance \(\delta\) between sample points impacts the optimality of the network size. We note that it suffices to establish the desired lower bound on network complexity for the one-dimensional case, i.e., \(d=1\).  This is valid because restricting the network to one dimension reduces the complexity required to memorize points satisfying the same separation conditions and consequently, any lower bound derived for \(d=1\) directly implies a corresponding lower bound for high-dimensional cases.
	
	To quantify the expressive capacity of ReLU network classes and then derive the minimal size of the network to memorize the samples, we first recall the notions of shattering and VC-dimension \citep{Vapnik2015}.	Let \( X = \{\Bx_1, \ldots, \Bx_N\} \subseteq \bR^d \) be a finite set of points and \( \cF \) be a class of real-valued functions on \( \bR^d \). The class \( \cF \) is said to shatter the point set \( X \) if for any signs \( \varepsilon_1, \ldots, \varepsilon_N \in \{\pm 1\}^N \), there exists an \( f \in \cF \) such that \( \sgn(f(\Bx_i)) = \varepsilon_i \). Here we are using the convention that \(\sgn(x)=-1\) when \(x<0\) and \(\sgn(x)=1\) otherwise. The VC-dimension of \( \cF \) is the largest \(N\) such that there exists \(N\) points that can be shattered by \(\cF\).
	
	Within the aforementioned one-dimensional setting, the structural properties of the sample points directly dictate the complexity of ReLU networks, which is quantified by the network's width \(W\) and depth \(L\). A very special case is when the data are uniform and fixed, i.e., \(x_i=i\), which has been analyzed by \citet{Siegel2023,Shen2022,Yang2025}. In particular, \citet[Theorem 4.6]{Yang2025} showed that uniform data samples can be memorized by a network whose width \(W\) and depth \(L\) satisfy \(W^2L^2\lesssim N \log C\). When \(C=2\) and \(y_i\in\{1,-1\}\), one can also get a lower bound \(W^2L^2\log (WL)\gtrsim N\) by using the VC-dimension of neural networks \citep{Bartlett2019} and the estimation of the number of sign patterns that can be matched by the neural networks. Thus, we have optimal result up to logarithmic factors in this case. 
	
	For the general case that \(\{x_i\}_{i=1}^N\) are not uniformly distributed, \citet{Siegel2026} proved that for any \(N\) data points in the unit ball with the pairwise separation \(\delta(N)\) being exponentially small in \(N\), i.e., \(\delta < e^{-cN}\) for some constant \(c\), deep ReLU networks require \(\Omega(N)\) parameters, i.e., \(W^2L\gtrsim N\), to memorize these data points, and this lower bound is sharp since \(\cO(N)\) parameters are always sufficient. In the following, we are going to generalize the result of \citet{Siegel2026} to the case that \(\delta^{-1}\) is a polynomial of \(N\) and we will see that the number of parameters can be smaller than the number of samples in this case, as already indicated by (\ref{upper bound in Section 3}). We will need the following Warren's lemma to bound the number of sign patterns a neural network can output on fixed points, see \citet[Lemma 17]{Bartlett2019} for instance. 
	
	\begin{lemma}\label{Lemma 3.1}
		Let \(P, M, D \in \bN \) with \(P\leq M \) and let \(f_1,\ldots, f_M\) be polynomials of degree at most \(D\) with \(P\) variables. Define \(K\) as the number of distinct sign vectors attained by these polynomials, i.e., 
		\[
		K:=\left|\left\{\left(\sgn(f_1(\Ba)),\ldots, \sgn(f_M(\Ba))\right): \Ba \in\bR^P\right\}\right|.
		\]  
		Then we have \(K\leq 2(2eMD/P)^P\).	
	\end{lemma}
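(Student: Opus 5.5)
The plan is to reduce the count of sign vectors to a count of connected components of the complement of a union of real hypersurfaces. That component count will then be bounded by the classical Milnor--Thom/Warren argument, so the whole proof is a perturbation step, a topological bound and a binomial estimate. Since the lemma is stated with the convention $\sgn(0)=1$, the first step is a perturbation that removes zeros.

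\textbf{Step 1 (perturbation).} For each of the $K$ realized sign vectors pick one witness $\Ba_k\in\bR^P$; this gives finitely many points. Let $\varepsilon>0$ be smaller than every nonzero value $|f_i(\Ba_k)|$, and set $g_i:=f_i+\varepsilon$. Then $\sgn(g_i(\Ba_k))=\sgn(f_i(\Ba_k))$ for all $i,k$, and moreover $g_i(\Ba_k)\neq 0$:
\begin{itemize}
\item if $f_i(\Ba_k)\ge 0$, then $g_i(\Ba_k)>0$;
\item if $f_i(\Ba_k)<0$, then $g_i(\Ba_k)\le -|f_i(\Ba_k)|+\varepsilon<0$.
\end{itemize}
The sign vector of $(g_1,\dots,g_M)$ is constant on each connected component of $U:=\bR^P\setminus\bigcup_i\{g_i=0\}$. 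Hence distinct sign vectors force the witnesses $\Ba_k$ into distinct components of $U$, and $K$ is at most the number of connected components of $U$. We may further perturb $\varepsilon$ (or each $g_i$ by its own small generic constant) so that the hypersurfaces $\{g_i=0\}$ are in general position. Concretely, every intersection $\bigcap_{i\in J}\{g_i=0\}$ with $|J|\le P$ should be a nonsingular variety of codimension $|J|$, and intersections with $|J|>P$ should be empty. This is possible by Sard's theorem and does not change the sign pattern at the finitely many witnesses.

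\textbf{Step 2 (counting components).} Following Warren and Milnor, after intersecting with a large ball, each component of $U$ contributes at least one point that is a critical point of a fixed generic linear (or squared-distance) function. The only exception is a component touching the boundary sphere, which is handled by adding the sphere as one extra hypersurface. Each such critical point lies on a component of some intersection variety $V_J=\bigcap_{i\in J}\{g_i=0\}$ with $|J|\le P$, taken together with possibly the ambient space itself. The number of components of each $V_J$ is bounded by a B\'ezout/Milnor-type estimate of order $(2D)^P$, obtained by counting the isolated solutions of the Lagrange system, which has degree at most $2D$ in $P$ unknowns. Summing over subsets gives
\[
K\;\le\; 2\,(2D)^P\sum_{j=0}^{P}\binom{M}{j}\ \ \text{(up to the precise constant in the Milnor step)},
\]
and the hypothesis $P\le M$ yields $\sum_{j=0}^{P}\binom{M}{j}\le (eM/P)^P$. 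Combining the two gives $K\le 2(2eMD/P)^P$.

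\textbf{Main obstacle.} The delicate step is Step 2, namely obtaining the per-subset bound with the sharp constant $(2D)^P$ rather than the cruder $(4D)^P$ of Warren's original theorem. I would first try the Anthony--Bartlett route: bound the number of components of $\{\pm g_i\neq 0\}$-type sets through the Milnor bound on the sum of Betti numbers of a nonsingular variety, combined with the general-position assumption from Step 1. If the constant does not come out exactly, the weaker $(4eMD/P)^P$ bound would still suffice for the later lower-bound application up to constants.
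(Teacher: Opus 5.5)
The paper does not prove this lemma. It quotes it from Bartlett et al.\ (Lemma 17), and your outline (one-sided shift, Sard, counting through the intersections $V_J$, binomial sum) is the standard argument behind that citation. Your Step 1 is correct, including the point that the shift $f_i+\varepsilon$ handles the convention $\sgn(0)=1$ without doubling the number of polynomials. The final estimate $\sum_{j\le P}\binom{M}{j}\le (eM/P)^P$ is also correct.

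The gap is in Step 2: the inequality $K\le 2(2D)^P\sum_{j\le P}\binom{M}{j}$ is asserted, and the mechanism you sketch does not yield it. Sending each component of $U$ to a minimizer of a generic linear $\lambda$, i.e.\ to a critical point of $\lambda|_{V_J}$ for the active set $J$, bounds $b_0(U)$ by the number of such \emph{critical points}. Even that requires showing a critical point minimizes over only one of the up to $2^{|J|}$ local orthants meeting there. It does not bound $b_0(U)$ by the number of \emph{components} of the $V_J$. For example, take $V_1=\{y=(x^2-1)^2-\tfrac12\}$ and $V_2=\{y=0\}$ in $\bR^2$ with $\lambda\approx y$. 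The two bounded wells lying below $V_2$ have their minimizers at two different critical points, near $(\pm1,-\tfrac12)$, of the single connected curve $V_1$. In the other direction, the Milnor-type estimate you invoke (degree $2D$ in $P$ unknowns, i.e.\ applied to $\sum_{i\in J}g_i^2$) counts components of $V_J$, not critical points of $\lambda$ on $V_J$. The actual Lagrange system for $\lambda|_{V_J}$ has $P+|J|$ unknowns, and naive B\'ezout only gives $D^{P+|J|}$. So the two halves of Step 2 do not connect. The auxiliary sphere also changes $M$ and the degree in a way you do not track. Finally, the fallback $(4eMD/P)^P$ is enough for Theorem~\ref{Theorem 3.2}, but it is not the stated lemma.

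The missing ingredient is a topological counting lemma that replaces the critical-point argument. Under your general-position assumption, $b_0\big(\bR^P\setminus\bigcup_i\{g_i=0\}\big)\le\sum_{|J|\le P}b_0(V_J)$, with $V_\emptyset=\bR^P$. You prove it by adding one hypersurface at a time, using the fact that for a closed smooth hypersurface $H$ of a manifold $A$ one has $b_0(A\setminus H)\le b_0(A)+b_0(H)$. The reason is that each component of $H$ has a tubular neighbourhood with at most two sides. Hence the graph whose vertices are the components of $A\setminus H$ and whose edges are the components of $H$ is connected over each component of $A$. Since $V_{J\cup\{m\}}$ is a closed hypersurface of the manifold $V_J$, you can run the induction inside every $V_J$, removing the remaining $V_i$ from both. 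This covers unbounded components directly, so no sphere is needed. Then apply Milnor's bound: a real algebraic subset of $\bR^P$ defined by polynomials of degree at most $D$ has at most $D(2D-1)^{P-1}\le\tfrac12(2D)^P$ connected components. This gives $K\le\tfrac12(2D)^P\sum_{j=0}^P\binom{M}{j}\le\tfrac12(2eMD/P)^P$, which is stronger than required. So the constant is not the delicate point. What you must supply is that each component of each $V_J$ creates at most one new component of the complement.
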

	
	Building on Warren's lemma, we establish a lower bound on the width and depth of ReLU networks required for memorizing any \(N\) points satisfying the norm and separation conditions.
	
	\begin{theorem}\label{Theorem 3.2}
		Let \(N,C\in\bN\) with \(C\ge 2\) and \(\delta>0\). Suppose that for every \(N\)-element set \(\cS=\{(x_1,y_1),\ldots,(x_N,y_N)\} \subseteq \bR\times \bR \) that satisfies:  
		\begin{enumerate}
			\renewcommand{\labelenumi}{(\theenumi)}
			
			\item  \( |x_i| \leq 1\) for all \( i \in [N]\) and \( |x_i - x_j| \geq \delta \) for all \( i \neq j \).
			\item  \(y_i\in[C]\).
		\end{enumerate}
		there exists \(f\in \mathcal{NN}(W,L)\) with \(W\geq2, L\geq1 \) such that \(f(x_i)=y_i\). Then, we have 
		\begin{equation}\label{lower bound}
			W^2L^2\gtrsim \frac{N\log C }{\log(\delta^{-1}) + \log C}. 
		\end{equation}
	\end{theorem}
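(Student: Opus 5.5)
The approach is a counting argument. Fix one point set and count the labelings it must realize, then bound the number of distinct label patterns a network in $\mathcal{NN}(W,L)$ can produce on that set by Warren's lemma, Lemma \ref{Lemma 3.1}. The catch is that memorizing labelings on a single fixed set of $N$ points only gives a bound like $W^2L\log(\cdot)\gtrsim N\log C$, which is a parameter-count bound rather than one in $W^2L^2$. To reach $W^2L^2$ we use the freedom in choosing the point set, following the VC-dimension strategy of \citet{Bartlett2019}, where depth enters twice: once through the number of parameters and once through the polynomial degree.

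\textbf{Step 1: reduction to sign patterns.} Let $M\le N$ and take $m$ with $m\approx \log C$ bits per label, so that $C\ge 2^m$ up to constants. Place $M$ equispaced points $x_1<\dots<x_M$ in $[-1,1]$ with spacing $\ge\delta$; this needs $M\lesssim\delta^{-1}$. Every labeling $y\in[C]^M$ is memorized. Threshold at the half-integers: for each point and each $k\in[C-1]$, consider $\sgn(f(x_i)-k-\tfrac12)$. Distinct labelings then give distinct sign vectors over these $M(C-1)$ threshold tests, so there are at least $C^M$ distinct sign patterns.

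\textbf{Step 2: upper bound via Warren.} Follow the layerwise partition argument of \citet{Bartlett2019}. As a function of the parameters $\theta\in\bR^P$, with $P\asymp W^2L$, the outputs $f_\theta(x_i)$ are piecewise polynomial. Refine the parameter space layer by layer: at layer $\ell$, the signs of the $W$ preactivations at each of the $M$ inputs are polynomials of degree $\le\ell$ in at most $P_\ell\lesssim W^2\ell$ variables. Lemma \ref{Lemma 3.1} then bounds the number of cells by $\prod_\ell 2(2eMW\ell/P_\ell)^{P_\ell}$. On each final cell the outputs are polynomials of degree $\le L+1$, and the $M(C-1)$ threshold tests add another Warren factor. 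The outcome is
\[
M\log C\lesssim \sum_{\ell} W^2\ell\,\log(MCWL)\lesssim W^2L^2\log(MCWL).
\]

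\textbf{Step 3: choose $M$ to fold the log.} Use $M=N$ directly; this is legitimate when $N\lesssim\delta^{-1}$, and otherwise the hypothesis is vacuous because $N$ $\delta$-separated points cannot fit in $[-1,1]$. This gives $W^2L^2\gtrsim N\log C/\log(NCWL)$. Since $N\lesssim\delta^{-1}$, we have $\log N\lesssim\log\delta^{-1}$. It remains to handle $\log(WL)$. If $W^2L^2\ge N$ we are already done, since $N\log C/(\log\delta^{-1}+\log C)\le N$. Otherwise $\log(WL)\lesssim\log N\lesssim\log\delta^{-1}$. In both cases the denominator is $\lesssim \log\delta^{-1}+\log C$, which is \eqref{lower bound}. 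For $C$ that is not a power of two we use $\lfloor\log C\rfloor\gtrsim\log C$, valid since $C\ge2$.

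The main obstacle is Step 2. The degree bookkeeping has to give a total of $W^2L^2$, not $W^2L^2\cdot L$ or $W^3$. The thresholds must also be counted without losing $C$ multiplicatively, which is fine here because $C$ enters only inside the logarithm. I would carry out this step by adapting the proof of the VC bound of \citet{Bartlett2019} nearly verbatim, with $M$ inputs and $C-1$ thresholds.
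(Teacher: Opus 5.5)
Your argument is correct and is essentially the paper's proof. You lower bound the number of sign patterns by $C^N$ by varying labels on $N$ fixed inputs, and upper bound it by $(cNC)^{O(W^2L^2)}$ via Lemma~\ref{Lemma 3.1} and the layerwise partition of \citet{Bartlett2019}. The differences are minor and work in your favour. You use only the $N$ chosen points with half-integer thresholds, whereas the paper uses all $TC$ integer thresholds on a full $\delta$-packing of size $T\asymp\delta^{-1}$; its lower bound on $K$ never uses the extra packing points. This gives you the slightly sharper denominator $\log N+\log C$. Your $\log(WL)$ case split is also unnecessary, because $P_\ell\ge W\ell$ already cancels the factor $W\ell$ in Warren's bound.

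Two small corrections. First, your opening claim that a single fixed point set cannot give a $W^2L^2$ bound is contradicted by your own Step~3, which does exactly that; moreover, a lower bound in $W^2L$ would be stronger, not weaker. Second, when $N$ $\delta$-separated points do not fit in $[-1,1]$, the vacuous hypothesis does not make the conclusion automatic. The bound actually fails in that regime, so both you and the paper are implicitly assuming $N\lesssim\delta^{-1}$.
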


	\begin{proof}
		Let \(\{x_1,\ldots,x_T \} \) be a maximal \(\delta\)-packing of \([-1,1]\), then any \(N\)-element subset of \(\{x_1,\ldots,x_T \} \) satisfies the first condition. It is easy to see the that \(T\asymp \delta^{-1}\) \citep[Chapter 5]{Wainwright2019}. We denote
		\[
		\cT:=\{(x_i,y_i):y_i\in [C], i\in [T]\}.
		\]
		Then every $N$-element subset \(\cS\) of \(\cT\) satisfies the conditions in the theorem. Hence, the network \(\mathcal{NN}(W,L)\) can memorize every \(N\)-element subset of \(\cT\). Let \(f_\Ba \in \mathcal{NN}(W,L)\) be the neural network function parameterized by \(\Ba \in \bR^P\), where \(P := (L-1)W^2 + (L+2)W + 1\) denotes the number of parameters. For the fixed point set \(\cT \),  we define 
		\begin{equation}\label{define sign pattern}
			\begin{aligned}	
				s(\Ba) &= \left( \sgn(f_\Ba(x_i) - y_i) \right)_{(x_i,y_i)\in \cT} \in \{-1,1\}^{TC}, \\
				K &= \left| \{ s(\Ba) : \Ba \in \bR^P \} \right|.
			\end{aligned} 
		\end{equation}
		Hence, \(K\) is the number of sign pattern achieved by the neural network. We are going to upper and lower bound \(K\) in the following. 
		
		We first claim that \(  K \leq (4eTC)^{W^2(L+1)(L+2)}\). Note that we may assume \(P\leq TC\) without loss of generality. Because if \(P>TC\), then \(4W^2L\geq P>TC\). By the trivial bound \(K\leq 2^{TC}\), we have \(K\leq 2^{4W^2L}<(4eTC)^{W^2(L+1)(L+2)}\), which is already the desired result. Next, we upper bound \(K\) by using Lemma \ref{Lemma 3.1}. To do this, we denote, for each \(\ell \in[L+1]\), the number of parameters up to layer \(\ell\) by \(P_\ell\). Hence, \(P_{L+1}=P\) is the total number of parameters and it is easy to see that \(W\ell\leq P_\ell \leq2W^2\ell\). Noticing that the activation function ReLU is piecewise linear, the function \(f_\Ba(x_i)-y_i \) indexed by \((x_i,y_i)\in \cT\) may not be a polynomial of \(\Ba\) over the entire domain \(\bR^P\), while Lemma \ref{Lemma 3.1} can only apply to polynomials. To overcome this difficulty, we apply the strategy proposed by \citet[Theorem 7]{Bartlett2019}, which partitions \(\bR^P\) iteratively into families \(\cA_0,\ldots,\cA_L\) such that
		\begin{enumerate}	\renewcommand{\labelenumi}{(\theenumi)}	
			\item \(\cA_0 =\bR^P\) and for \(\ell\in [L]\), 
			\begin{equation}\label{partition relationship}
				\frac{|\cA_\ell|}{|\cA_{\ell-1}|}
				\le 2\Big(\frac{2eTC W\ell}{P_\ell}\Big)^{P_\ell}.
			\end{equation}
			\item For each \(\ell \in [L+1]\), each subregion \(A\in\cA_{\ell-1}\), each \((x_i,y_i)\in \cT\) and each neuron \(u\) in the \(\ell\)-th layer. As \(\Ba\) varies over \(A\), the net input to the neuron \(u\) is a fixed polynomial with \(P_\ell\) variables in \(\Ba\), and the total degree of this polynomial is at most \(\ell\).
		\end{enumerate}
		
		Hence, on each final sub-region \(A\in\cA_L\), each mapping \(\Ba\mapsto f_\Ba(x_i)-y_i\) is
		a polynomial of degree at most \(L+1\) in the parameters restricted to \(A\). By Lemma \ref{Lemma 3.1}, 
		\[
		|(s(\Ba): \Ba \in A)| \le 2\left(2eTC(L+1)/{P_{L+1}}\right)^{P_{L+1}}.
		\]
		And iteratively applying (\ref{partition relationship})  gives \(|\cA_L|\leq\prod_{\ell=1}^{L}2({2eTCW\ell}/{P_\ell})^{P_\ell}\). Combining the above two estimates yields
		\begin{equation}\label{bound sign patterns}
			K \leq \sum_{A\in \cA_L}|(s(\Ba): \Ba \in A)|\leq \prod_{\ell=1}^{L+1} 2\Big(\frac{2eTC W\ell}{P_\ell}\Big)^{P_\ell}
			\leq (4eTC)^{ W^2(L+1)(L+2)},
		\end{equation}
		where we use \(W\ell\le P_\ell\le 2W^2\ell\) in the last inequality.
		
		On the other hand, by assumption, for any \(N\)-element subset \(\cS\subseteq \cT\), there exits \(f_\Ba \in \mathcal{NN}(W,L)\) with \(\Ba=\Ba(\cS)\) such that \(f_\Ba(x_i)=y_i\) for each \((x_i,y_i)\in \cS\). Now we fix the inputs to be \(x_1,\dots,x_N\) and consider the \(N\)-element set \(\cS_\By=\{(x_i,y_i):i\in [N]\}\) indexed by \(\By=(y_1,\dots,y_N)\in [C]^N\). It is easy to check that the sign pattern \(s(\Ba(\cS_\By))\) are different for different \(\By\in [C]^N\), which implies that \(K\geq C^N\). Combining this lower bound with (\ref{bound sign patterns}) gives
		\[
		(4eTC)^{W^2(L+1)(L+2)}\geq C^N.
		\]
		Recalling that \(T\asymp \delta^{-1} \) and taking logarithms of both sides, we derive the following lower bound
		\[
		W^2L^2\gtrsim \frac{N\log C}{\log T+\log C} \gtrsim \frac{N\log C }{\log(\delta^{-1}) + \log C},
		\]
		which finishes the proof.
	\end{proof}
	
	Building on Theorem \ref{Theorem 3.2}, we further analyze the minimal size of the neural network that can memorize any \(N\) samples. If we assume that \(C\) is a constant, which is the typical case for classification in practice, then we can combine the upper bound (\ref{upper bound in Section 3}) and the lower bound (\ref{lower bound}) to get
	\[
	\frac{N}{\log(\delta^{-1})}\lesssim W^2L^2\lesssim N \log (\delta^{-1}),
	\]
	which means that the size of the network we construct in Theorem \ref{Theorem 2.1} is optimal up to the factor \((\log(\delta^{-1}))^2\). Furthermore, when \(\delta^{-1}\asymp\poly(N)\), the ratio between the constructive upper bound and the theoretical lower bound is \((\log N)^2\), which implies that our results are sharp up to polylogarithmic factors in this regime. We remark that, if we choose the width \(W\) to be a constant, then the number of parameter \(P\asymp L\) and our estimates reduce to \( \sqrt{N/\log N} \lesssim P \lesssim \sqrt{N \log N} \). Hence, the number of parameters can be smaller than the number of samples. However, when \(\delta^{-1}\asymp e^{cN}\) for some constant \(c\), the ratio between the upper and the lower bounds is \(N^2\) and we loss the optimality. In this regime, \citet{Siegel2026} gave the optimal lower bound \(P \asymp W^2L \gtrsim N\), which implies that the number of parameters must be at least in the order of sample size. So, we have a transition from \(\delta^{-1}\asymp\poly(N)\) to \(\delta^{-1}\asymp e^{cN}\). 
	
	Finally, we extend the analysis of \citet{Siegel2026} to derive another lower bounds that complement our preceding results. Note that the result in \citet{Siegel2026} is restricted to the regime where the pairwise separation distance \(\delta\) satisfies \(\delta^{-1}>e^{cN}\) for some constant \(c\), meaning its constraint is solely tied to the sample size \(N\) with no explicit link to the architecture of the network performing the memorization task. Our extended bounds explicitly link the data separation condition \(\delta\) to the width \(W\) and depth \(L\) of the memorizing ReLU network, aligning directly with this work’s central theme of characterizing the width-depth trade-off for neural network memorization capacity. The formal statement of this extended result is given below.
	
	\begin{proposition}\label{Proposition 3.3}
		Let \(N,C \in\bN\) and \(\delta>0\), where we assume \(N\) is even, \(C\ge 2\) and \(\delta^{-1}\ge 2N\). Suppose that for every \(N\) points set \(\cS=\{(x_1,y_1),\ldots,(x_N,y_N)\} \subseteq \bR\times \bR \) that satisfies the following two conditions
		\begin{enumerate}
			\renewcommand{\labelenumi}{(\theenumi)}
			\item  \( x_i\in [0,1]\) for all \( i \in [N]\) and \( |x_i - x_j| \geq \delta \) for all \( i \neq j \).
			\item  \(y_i\in[C]\).
		\end{enumerate}
		there exists \(f\in \mathcal{NN}(W,L)\) with \(W\geq2, L\geq1 \) such that \(f(x_i)=y_i\).
		\begin{enumerate}
			\renewcommand{\labelenumi}{(\theenumi)}
			\item If \(\delta^{-1} > e^{5WL}\), then we have \(W^2L\geq N/32\).
			\item If \(17(W+1)^{2L}<\delta^{-1}\leq e^{5WL}\), then we have \(W^3L/\log W \geq N/72\).
		\end{enumerate}
		
	\end{proposition}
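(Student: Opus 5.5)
The plan is to refine the sign-pattern argument of Theorem \ref{Theorem 3.2}, following \citet{Siegel2026}. The goal is to make the number of polynomials fed into Warren's lemma (Lemma \ref{Lemma 3.1}) essentially independent of \(\delta\); the price is that the dependence on the architecture appears through the number of linear pieces of \(f\).

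\textbf{Configuration and pieces.} Since \(N\) is even, we fix \(N/2\) anchors \(z_k=2k/N\), \(k=0,\ldots,N/2-1\), in \([0,1]\). Near each anchor we consider a pair of points \(p_k<p_k+\delta\). Because \(\delta^{-1}\ge 2N\), any such choice gives \(N\) points in \([0,1]\) with pairwise separation at least \(\delta\). For each \(\varepsilon\in\{\pm1\}^{N/2}\) we label the pair by \((1,2)\) if \(\varepsilon_k=1\) and by \((2,1)\) otherwise. The memorizing network \(f_\Ba\) then satisfies \(\sgn\bigl(f_\Ba(p_k+\delta)-f_\Ba(p_k)\bigr)=\varepsilon_k\) for all \(k\), so at least \(2^{N/2}\) sign patterns must be realized. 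A one-dimensional ReLU network in \(\mathcal{NN}(W,L)\) is piecewise linear with at most \((W+1)^L\) pieces, so at most \((W+1)^L\) of the \(N/2\) intervals \([p_k,p_k+\delta]\) contain a breakpoint. On every other interval, the sign of the increment equals the sign of the slope of the single piece covering it.

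\textbf{Case (1), \(\delta^{-1}>e^{5WL}\).} The smallness of \(\delta\) is used as follows. We slide the pairs inside windows of length \(\asymp 1/N\) and choose the positions \(p_k\) from a grid of mesh \(\asymp\delta\); this grid has far more positions than \(e^{5WL}\ge(W+1)^{2L}\). The intent is a pigeonhole or averaging argument showing that the breakpoints of \(f_\Ba\), whose number is bounded independently of \(\delta\), cannot track all the pairs. The sign pattern \(\varepsilon\) is then read off from the slopes of the pieces, that is, from the signs of boundedly many quantities. On each cell of the Bartlett-type parameter partition used in Theorem \ref{Theorem 3.2}, these slopes are polynomials in \(\Ba\). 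We then apply Warren's lemma with \(M\lesssim N\) polynomials, in place of \(TC\asymp\delta^{-1}C\). The number of cells must also be controlled without introducing \(\log\delta^{-1}\). This gives \(2^{N/2}\le 2(c\,N D/P)^P\) for the appropriate degree \(D\), and the constants should then yield \(P\ge N/8\). Since \(P\le 4W^2L\), this gives \(W^2L\ge N/32\).

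\textbf{Case (2), \(17(W+1)^{2L}<\delta^{-1}\le e^{5WL}\).} The same scheme applies, but the grid of admissible positions is only guaranteed to exceed \((W+1)^{2L}\), not \(e^{5WL}\). The counting must therefore be done with \(\log\) of the number of pieces, which is \(\asymp L\log W\), instead of a constant. Balancing the Warren bound against the piece count \(L\log(W+1)\) should lose a factor \(W/\log W\) relative to case (1), giving \(W^3L/\log W\ge N/72\).

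\textbf{Main obstacle.} The delicate step is the reduction in case (1): showing rigorously that a function with few pieces, memorizing every such configuration, has a sign pattern determined by \(O(N)\) polynomial sign conditions whose number does not grow with \(\delta^{-1}\). What I would try first is to fix, for each parameter cell, which pieces cover which anchors and to exploit the freedom in choosing the \(p_k\) to avoid breakpoints. The constants \(5\), \(17\), \(32\) and \(72\) would then come from tracking this pigeonhole together with Warren's lemma.
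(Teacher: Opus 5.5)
There is a genuine gap, and it is exactly the step you flag as the main obstacle. Your lower bound on the number of realized patterns, $2^{N/2}$, does not depend on $\delta$. So your scheme only works if the Warren-side upper bound is also free of $\delta$, and nothing you describe achieves this. For a single fixed configuration of pairs, the increment signs are signs of piecewise polynomials in $\Ba$ evaluated at $N$ fixed inputs. The layerwise partition from Theorem~\ref{Theorem 3.2} bounds the number of such patterns by $\prod_{\ell}2(2eNW\ell/P_\ell)^{P_\ell}$, whose exponent $\sum_\ell P_\ell$ is of order $W^2L^2$. Comparing with $2^{N/2}$ then gives only a VC-type bound $W^2L^2\log N\gtrsim N$, in which $\delta$ never enters. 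If instead you let the $p_k$ range over a $\delta$-grid, the partition must be taken with respect to all grid points. Then $\log\delta^{-1}$ reappears in the cell count, with nothing on your lower-bound side to cancel it. The pigeonhole remark (few breakpoints cannot track all pairs) does not close this. It is true for each fixed network, but the memorizing network depends on the configuration and on $\varepsilon$. It becomes usable only after a union bound over the possible behaviours of networks on the whole grid, and that counting is what is missing. The closing arithmetic also fails: with Warren's constant $2e$ and $P=N/8$ one gets $2(2eND/P)^P=2(16eD)^{N/8}>16^{N/8}=2^{N/2}$. So $2^{N/2}\le 2(cND/P)^P$ already holds at $P=N/8$ with room to spare, and cannot yield $P\ge N/8$.

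The missing idea, which the paper takes from \citet{Siegel2026}, is to count the sign patterns $K$ of $\mathcal{NN}(W,L)$ on the whole grid $X_T=\{i/T:0\le i<T\}$, with $2N\le T\le\delta^{-1}$, and to let $\log T$ appear on both sides. Your pairs-and-breakpoints intuition becomes the lower bound $K\ge\bigl(T/(4M)\bigr)^{N/2}$ with $M\le(W+1)^L$. Every $N$-subset of $X_T$ is shattered, while one pattern with at most $M$ sign changes can serve only few of the many pair configurations, so the number of patterns must grow like $T^{N/2}$. For the upper bound the paper does not use the layerwise partition. It uses the bound $K\le\bigl(4e(L+1)T2^{WL}(1+WL)\bigr)^{P}$, whose form corresponds to applying Lemma~\ref{Lemma 3.1} to $T\,2^{WL}(WL+1)$ polynomials of degree at most $L+1$ (one per grid point, activation pattern, and neuron or output). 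The exponent is $P\le 4W^2L$ rather than $\sum_\ell P_\ell$. This gives $\frac{N}{2}\bigl[\log T-\log(4(W+1)^L)\bigr]\le 4W^2L(\log T+4WL)$. It also explains the thresholds, which you attribute only to having many grid positions. The condition $\delta^{-1}>e^{5WL}$ allows $T>e^{4WL}$, so $\log T$ dominates both the $WL$ coming from $2^{WL}$ in Warren's base and $2\log(4(W+1)^L)$; hence $W^2L\ge N/32$. In case (2) one takes $16(W+1)^{2L}<T\le 17(W+1)^{2L}$, so the left side is at least $\frac{N}{2}L\log W$ while the right side is at most $36W^3L^2$. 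The lost factor $W/\log W$ is exactly the ratio between $WL$ and $L\log W$.
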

	\begin{proof}
		We follow the analysis of \citet{Siegel2026}. Let \(T\) be an even integer such that \( \delta^{-1} \ge T \ge 2N\). We consider the point set 
		\[
		X_T=\left\{ \frac{i}{T}, i=0,\ldots,T-1 \right\}\subseteq[0,1].
		\]
		Then, every \(N\)-element subset of \(X_T\) satisfies the first condition of the proposition. By assumption on the memorization capacity, the neural network \(\mathcal{NN}(W,L)\) can shatter every \(N\)-element subset of \(X_T\). We use \(K\) to denote the set of all sign patterns that \(\mathcal{NN}(W,L)\) can realize on \(X_T\), i.e.
		\[
		K:= \left| \left\{ (\varepsilon_i)_{i=0}^{T-1}\in \{-1,1\}^T:  \exists f\in \mathcal{NN}(W,L) \mbox{ s.t.} \sgn(f(i/T))=\varepsilon_i  \right\} \right|.
		\]
		As shown by \citet[Eq. (2.8)]{Siegel2026}, \(K\) can be lower bound as
		\[
		K \ge \left( \frac{T}{4 M}\right)^{N/2},
		\]
		where \(M\) is the maximal number of sign changes a function \(f \in \mathcal{NN}(W,L)\) can have, i.e., there exists \(x_0<x_1<\cdots<x_M\) such that \(\sgn(f(x_{i-1}))\neq \sgn(f(x_i)) \) for \(i\in [M]\). Since \(f\) is a piecewise linear function, \(M\) is upper bounded by the number of pieces. The recent result of \citet[Theorem 1]{Serra2018} showed that any function parameterized by ReLU neural network \(\mathcal{NN}(W,L)\) can have at most
		\[
		\sum_{(j_1,\dots,j_L)\in \{0,1\}^L} \prod_{\ell =1}^L \binom{W}{j_\ell} = (W+1)^L
		\]
		pieces. Hence, we have \(M\le (W+1)^L\) and \(K \ge (T/(4(W+1)^L) )^{N/2}\). On the other hand, one can upper bound \(K\) by using Lemma \ref{Lemma 3.1} and the argument in \citet[Eq. (2.19)]{Siegel2026} and obtain
		\[
		K\le (4e(L+1)T2^{WL}(1+WL))^P,
		\]
		where \(P\le 4W^2L\) is the number of parameters in the network. Combing the above upper and lower bounds for \(K\), we have
		\[
		\left(\frac{T}{4(W+1)^L}\right)^{N/2} \le K\le (4e(L+1)T2^{WL}(1+WL))^{4W^2L}.
		\]
		Taking the logarithm of both sides, we obtain
		\begin{equation} \label{key inequality}
			\begin{aligned}
				&\frac{N}{2}\left[\log T-\log(4(W+1)^L)\right] \\
				\leq& 4W^2L\left[\log T+\log(4e2^{WL}(L+1)(WL+1))\right]\\
				\leq& 4W^2L\left(\log T+4WL\right).
			\end{aligned}
		\end{equation} 
		Next, we consider two regimes for the value of \(\delta\). Recall that \(T\) is any even integer such that \( \delta^{-1} \ge T \ge 2N\).
		\begin{enumerate}
			\renewcommand{\labelenumi}{(\theenumi)}		
			\item When \(\delta^{-1}> e^{5WL}\), we can choose an even integer \(T\) such that \(T>e^{4WL}>16(W+1)^{2L}\). For such \(T\), we have \(2\log(4(W+1)^L)<\log T\) and \(\log T>4WL\). The inequality (\ref{key inequality}) implies
			\[\frac{N}{2}\cdot\frac{\log T}{2}\leq 8W^2L\log T,\] which shows that \(W^2L\geq N/32\).
			
			\item When \(17(W+1)^{2L}<\delta^{-1}\leq e^{5WL}\), we can choose an even integer \(T\) such that \(e^{5WL}> 17(W+1)^{2L} \geq T>16(W+1)^{2L}\). For such \(T\), we have \(2\log(4(W+1)^L)<\log T\leq 5WL\). The inequality (\ref{key inequality}) implies
			\[\frac{N}{2}\log(4(W+1)^L)\leq 4W^2L\cdot9WL,\] which shows that \[\frac{N}{2}L\log W\leq 36W^3L^2,\] i.e.,
			\[\frac{W^3L}{\log W} \geq \frac{N}{72}.\]
		\end{enumerate}
		These two regimes together verify all statements of the proposition, completing the proof. \end{proof}
	
	
		
		
		Proposition \ref{Proposition 3.3} shows that, if the separation distance \(\delta\) is small comparing with the network size, then, in order to memorize the data samples, the number of parameters should be at least in the order of sample size. We remark that when the network width \(W\) is a constant, then the number of parameters in the network is \(P\asymp L\) and Proposition \ref{Proposition 3.3} implies that when \(\delta^{-1}\gtrsim e^{2L}\), we need \(P\gtrsim N\) parameters to memorize \(N\) data samples. In the critical case \(L\asymp P \asymp N\), our condition \(\delta^{-1}\gtrsim e^{2L}\) becomes \(\delta^{-1}\ge e^{cL}\) for some constant \(c>0\), which is consistent with the result of \citet{Siegel2026}.

		\section{Conclusion}\label{Section 4}
		
		In this work, we investigate the expressive power of ReLU neural networks, focusing on the minimal network size required to memorize \(N\) labeled data points in the \(d\)-dimensional unit ball with pairwise separation distance at least \(\delta\). Different from the results in the literature, we give precise characterization of the trade-off between width and depth. Our main results show that the memorization of \(N\) separated points can be done using a ReLU neural network whose width \(W\) and depth \(L\) satisfy \(W^2L^2\lesssim N (\log(\delta^{-1})+\log C )\), where \(C\) is the number of possible distinct discrete labels. Besides, we further establish a lower bound \(W^2L^2\gtrsim {N\log C}/({\log C+\log(\delta^{-1})})\), which confirms that our construction is optimal up to polylogarithmic factors when \(\delta^{-1}\) grows polynomially on \(N\).
		
		Several promising directions for the future research emerge from this work. First, exploring the connection between the proposed network structure and the practical optimization process is of great significance. Specifically, it remains an open question whether standard training algorithms (e.g., GD or SGD) can converge to a solution which memorizes \(N\) data samples while the size of the network is as small as possible. Second, extending the structural design of this work to other mainstream activation functions (e.g., GELU, Sigmoid) is worth in-depth investigation, so as to clarify whether the width-depth trade-off mechanism and complexity bounds obtained in this study are unique to ReLU or generalizable across different activation functions. Third, expanding the proposed framework to complex data scenarios, such as manifold-valued data and high-dimensional sparse data, and optimizing the module division and parameter allocation strategies according to specific data characteristics, will further enhance the practical value of the research results. Finally, it is necessary to verify whether the polylogarithmic factors in the width-depth complexity bounds are inherent to the memorization problem of ReLU networks or merely technical artifacts of the current proof methods.

		\section*{Acknowledgments}
		
		The work described in this paper was partially supported by National Natural Science Foundation of China under Grants 12501131 and 12526216.

				\bibliographystyle{myplainnat}
				\bibliography{References}
			\end{document}